\documentclass[pdflatex,sn-mathphys-num,Numbered]{sn-jnl}

\usepackage{graphicx}
\usepackage{multirow}
\usepackage{amsmath,amssymb,amsfonts,amsthm}
\usepackage{booktabs}
\usepackage{tabularx}
\usepackage{array}
\usepackage{url}
\usepackage{xcolor}
\usepackage{tikz}
\usetikzlibrary{arrows.meta, positioning, calc}
\usepackage{listings}
\usepackage{enumitem}
\hypersetup{hidelinks}

\newcolumntype{Y}{>{\raggedright\arraybackslash}X}

\newtheoremstyle{p17plain}%
  {6pt}{6pt}{\itshape}{0pt}{\bfseries}{.}{ }{}%
\newtheoremstyle{p17definition}%
  {6pt}{6pt}{\normalfont}{0pt}{\bfseries}{.}{ }{}%
\theoremstyle{p17definition}
\newtheorem{definition}{Definition}
\theoremstyle{p17plain}
\newtheorem{proposition}{Proposition}

\newtheorem{corollary}{Corollary}

\renewcommand{\proofname}{Proof}
\makeatletter
\renewenvironment{proof}[1][\proofname]{\par
  \pushQED{\qed}\normalfont \topsep6\p@\@plus6\p@\relax
  \trivlist
  \item[\hskip\labelsep\bfseries
    #1\@addpunct{.}]\ignorespaces
}{\popQED\endtrivlist\@endpefalse}
\makeatother

\begin{document}

\title[Governed Caste Reassignment]{Governed Caste Reassignment in Heterogeneous Swarms:
An Asymmetric-Trust Protocol with Audited Operator Countersignature}

\author[1]{\fnm{Xue} \sur{Qin}}\email{qinxue@me.com}

\author[2]{\fnm{Simin} \sur{Luan}}\email{luansiminiot@gmail.com}

\author*[3]{\fnm{Cong} \sur{Yang}}\email{cong.yang@suda.edu.cn}

\author*[2]{\fnm{Zhijun} \sur{Li}}\email{lizhijunos@hit.edu.cn}

\affil*[1]{\orgdiv{School of Software}, \orgname{Harbin Institute of Technology}, \orgaddress{\city{Harbin}, \country{China}}}

\affil[2]{\orgdiv{School of Computer Science and Technology}, \orgname{Harbin Institute of Technology}, \orgaddress{\city{Harbin}, \country{China}}}

\affil[3]{\orgdiv{School of Future Science and Engineering}, \orgname{Soochow University}, \orgaddress{\city{Suzhou}, \country{China}}}

\abstract{In heterogeneous robot swarms, caste reassignment is a
high-frequency runtime event: battery depletion forces a
logistics-caste robot into a low-power patrol caste; a payload
limit forces a heavy-lift caste robot into an observation caste.
Existing approaches treat reassignment as an internal allocation
algorithm (auction, consensus bundle, behaviour-tree) and do
not expose the reassignment event to external authority. We argue
that for regulated embodied deployments a caste change that
elevates a robot's privilege envelope is a governance event that
must be auditable and externally authorised. We propose an asymmetric-trust
protocol in which auto-tightening reassignments (going to safer
or lower-privilege castes) are admitted automatically while
bounded relaxation (going to higher-privilege castes) requires
operator countersignature. The protocol carries a signed
cause-chain documenting why each reassignment was needed; the
chain is a hash-chained, Merkle-committed audit log verifiable
offline against the operator's key. We evaluate a reference
implementation with real Ed25519 signatures and a hash-chained
Merkle audit log, in simulation over fleets up to $100$ robots
with a parameterized robot-class timing point at $10$ robots (slower
signatures, heavier wireless variance; not a hardware measurement):
auto-tightening completes in single-digit to low-double-digit
milliseconds and the audit gate adds single-digit to low-double-digit
milliseconds over an ungoverned reassignment (growing with fleet size). The protocol refuses all four explicit attack
patterns (caste laundering, repeated-relaxation privilege escalation,
operator impersonation, and cause-chain forgery) by construction; a
partially-governed baseline isolates which gate stops which attack, a
real offline auditor reconstructs and verifies every record and rejects
tampering, and forged tighten records never reach the Byzantine quorum.
The construction generalises a single-agent persona-mutation governance
gate to swarm-level caste governance.}

\keywords{Multi-Agent Systems, Role Reassignment, Adjustable Autonomy, Operator Authority, Cryptographic Audit, Governed Heterogeneous Swarms, Asymmetric-Trust Protocol}

\maketitle

\begin{figure}[t]
\centering
\includegraphics[width=0.9\linewidth]{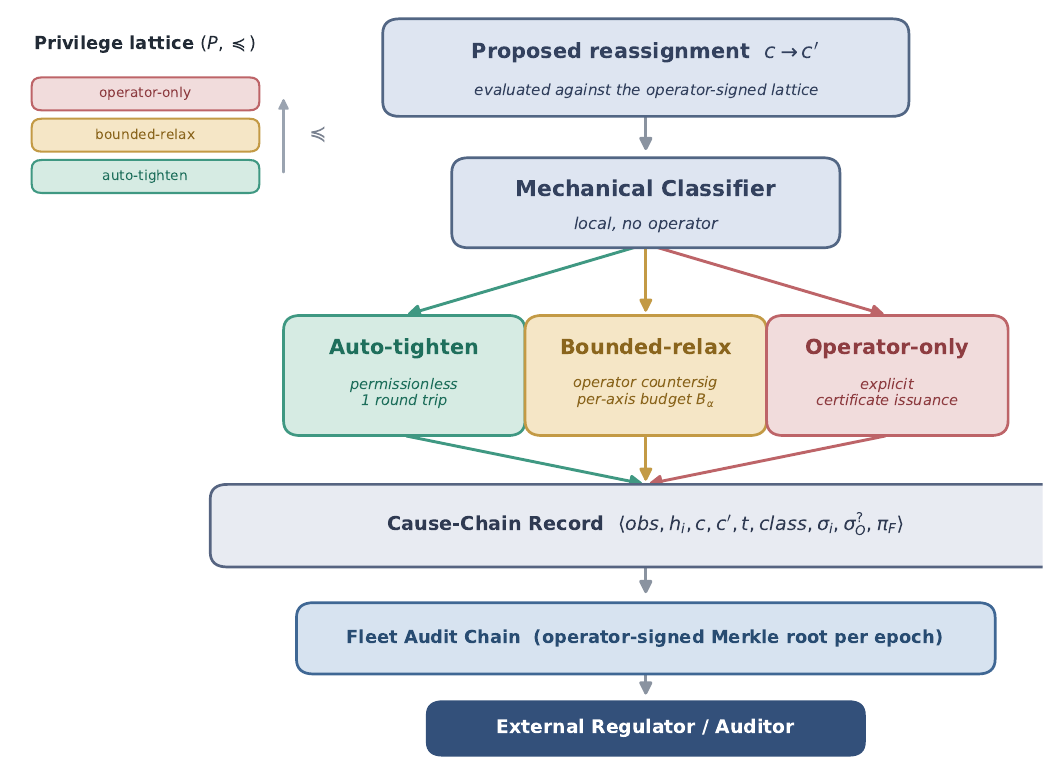}
\caption{Governed caste reassignment, overview. A proposed
caste transition $c \to c'$ is classified locally and
mechanically against the operator-declared privilege lattice
$(P, \preceq)$ into one of three classes (auto-tighten,
bounded-relax, operator-only). Each class is admitted through a
different authorisation path: tightening is permissionless,
bounded relaxation requires an operator countersignature
against a per-axis bound $B_\alpha$, and operator-only
admission requires explicit certificate issuance. Every
transition produces a cause-chain record carrying the
triggering observation, the proposing robot's identity hash,
the source and target castes, the class, the robot signature,
and the operator countersignature (if any), appended to the
fleet-level audit chain whose per-epoch Merkle root the operator
signs and which is presentable to an external regulator.}
\label{fig:hero}
\end{figure}

\section{Introduction}\label{sec:intro}

A heterogeneous robot swarm operating in a regulated environment
(a hospital ward fleet, a factory cobot pool, a defence
installation) must continuously reassign \emph{caste}, our term
for a role bound to a capability profile, as battery levels
fall, payloads change, and operational priorities shift. We use
``caste'' and ``role'' interchangeably, preferring ``caste'' for
its biological connotation of a capability-bound role.
Multi-agent systems have a long tradition of treating
reassignment as a first-class problem. Role allocation and
reallocation were formalised by Nair, Tambe and
Marsella~\citep{nair2003roleallocation} at AAMAS 2003 and
surveyed comprehensively by Campbell and
Wu~\citep{campbell2011roleallocation}. Organisational MAS adds a
deontic dimension binding permissions and obligations to
roles, as in the MOISE+ family~\citep{hubner2002moiseplus}, and
the broader organisational-paradigm
landscape~\citep{horling2004orgparadigms}. Coalition
formation~\citep{shehory1998coalition} and market-based
multi-robot coordination~\citep{dias2006marketmrta} offer
algorithmic substrates, with the MRTA taxonomy~\citep{gerkey2004taxonomy}
and its extension iTax~\citep{korsah2013itax} as canonical
classifications. The closest analogue to our human-anchored
authorisation is adjustable
autonomy~\citep{scerri2002adjustableautonomy}, which models
when an agent transfers a decision to an external human
authority via a Markov-decision-process-style transfer-of-control
policy. Normative MAS~\citep{boella2006normativemas} models
roles and obligations as norms, with explicit monitoring and
sanctioning.

This paper argues that for regulated deployments a caste change
that elevates a robot's privilege envelope is a governance
event that must be both auditable and externally authorised, and
that the existing constructions above do not jointly supply
three properties our setting requires: (i) \emph{privilege
monotonicity} as a structural invariant (tightening is
permissionless, relaxation is gated); (ii) \emph{cryptographic,
offline-verifiable} auditability of every transition; and
(iii) a \emph{per-axis bounded budget} on relaxation that an
external regulator can inspect without trusting any internal
allocation algorithm. Adjustable autonomy supplies the
transfer-of-control mechanism but not the cryptographic
audit-chain anchoring, role reallocation supplies the algorithmic
choice but not the asymmetric gating, and normative MAS supplies
the norm-and-sanction vocabulary but not the privilege-lattice
discipline.

We propose an asymmetric-trust protocol pattern that ports a
single-agent persona-mutation governance gate~\citep{aeros-p15}
into the swarm setting and anchors it in the fleet-level
federation primitive of~\citep{aeros-p16}. The construction sits
within a programme of runtime governance for embodied agents
that previously factored admission, policy enforcement,
recovery, and human takeover out of the agent into a separable
layer~\citep{aeros-p3}; the present paper extends the same
posture to caste change in heterogeneous fleets. The protocol
classifies each proposed reassignment as auto-tighten,
bounded-relax, or operator-only, and admits each class through a
different authorisation path. The delta over prior MAS work is
the combination of (i)--(iii) above, not the idea that role
change should be externally governed.

\paragraph{Contributions.} This paper makes five contributions.

\begin{enumerate}[leftmargin=*, itemsep=2pt]
\item A \emph{caste reassignment taxonomy} (\S\ref{sec:taxonomy})
  classifying every reassignment as one of three governance
  classes (auto-tighten, bounded-relax, operator-only), motivated
  by eusocial-insect caste-plasticity
  analogues~\citep{wilson1971insectsocieties,robinson1992agepolyethism}
  but reframed as a software-pattern problem.

\item An \emph{asymmetric-trust protocol} (\S\ref{sec:protocol})
  lifting a single-agent asymmetric-trust gate~\citep{aeros-p15}
  to swarm-level caste reassignment. Tightening is
  permissionless; bounded relaxation requires operator
  countersignature; operator-only castes require explicit
  certificate issuance. Every transition is auditable.

\item A \emph{cause-chain construction} (\S\ref{sec:causechain})
  documenting reassignment provenance, integrated with the
  fleet-level audit chain of the federation primitive
  of~\citep{aeros-p16}. The chain supports forensic
  reconstruction of any reassignment from the audit chain alone.

\item An \emph{evaluation} (\S\ref{sec:eval}) of a reference
  implementation with real Ed25519 signatures and a hash-chained
  Merkle audit log, in simulation over fleets up to $100$ robots (with
  a parameterized robot-class timing point at $10$ robots). It reports
  single-digit to low-double-digit millisecond tightening latency and a
  single-digit to low-double-digit millisecond governance overhead
  (growing with fleet size), and it isolates the gate each
  attack requires via a partially-governed baseline, exercises the
  offline auditor and the Byzantine quorum directly, and reports the
  measured cost of the governed decision.

\item A \emph{distributed audit layer} (\S\ref{sec:replication}) that
  replicates the admitted audit records across $N$ per-member logs with a
  quorum-committed total order, proves an agreement and fork-exclusion
  guarantee (Proposition~\ref{prop:fork-exclusion}), and is evaluated both in
  a property simulation and as a real deployment of $N$ separate processes
  over TCP sockets with a real Byzantine equivocator (\S\ref{sec:rq5}), on
  which every honest replica agrees and detects the equivocation and no fork
  commits. This is a safety layer over already-admitted records, not a new
  consensus protocol; it composes the standard quorum-certificate and audit-log
  primitives for the governed-caste setting. The agreement guarantee is a safety
  property for running nodes (the real run is single-host loopback, and
  crash-recovery durability and multi-host networking are future work).
\end{enumerate}

\paragraph{Scope of the contribution.} We are precise about what this
paper offers. The contribution is a governance \emph{pattern} for caste
reassignment, together with a reference implementation that realises the
full cryptographic substrate (per-principal Ed25519 keys, an
operator-signed identity manifest, a hash-chained Merkle audit log with a
persisted Byzantine-quorum certificate, an offline auditor, and a
replicated $N$-member audit layer with quorum-committed total order and
fork exclusion) and evaluates it in simulation. The novelty is the specific
combination (i)--(iii) above, not any single primitive: role governance,
access-control lattices, threshold signatures, and audit logs are all
classical. The replicated-log layer with fork exclusion is implemented and
evaluated directly, both in a property simulation at scale and as a real
multi-process deployment over TCP sockets with a real Byzantine equivocator
process (\S\ref{sec:replication}, \S\ref{sec:rq5}); what remains out of scope
is a physical-robot deployment, a multi-host wide-area network, and the
asynchronous-consensus concerns (network asynchrony, view changes, and
liveness under asynchrony) that any production replication would layer beneath
the safety property we establish (\S\ref{sec:discussion}).

\section{Related Work}\label{sec:related}

\paragraph{Multi-Robot Task Allocation.}
The MRTA taxonomy~\citep{gerkey2004taxonomy} and its extension iTax
of Korsah et~al.~\citep{korsah2013itax} are the canonical references
for the internal-allocation framing, with coalition
formation~\citep{shehory1998coalition} and market-based multi-robot
coordination~\citep{dias2006marketmrta} as algorithmic substrates.
Recent dynamic allocation extends the line with behaviour-tree
capabilities~\citep{behaviortree2024mrta} and global-games
formulations~\citep{globalgames2025mrta}, and the
Swarmanoid~\citep{dorigo2013swarmanoid} lineage demonstrates
heterogeneous-swarm hardware. These constructions optimise an
internal objective (throughput, makespan, energy budget) and treat
reassignment as an algorithmic decision; none expose the reassignment
event to external authority. Caste reassignment is orthogonal: the
protocol governs role \emph{change} rather than coalition choice, and
composes above any allocator that respects the privilege lattice.

\paragraph{Role Allocation and Reallocation.}
The closest body of MAS work is role allocation and reallocation
in agent teams. Nair, Tambe and
Marsella~\citep{nair2003roleallocation} formalise role
reallocation as conditions change, and provide the practical
analysis that subsequent work builds on. Campbell and
Wu~\citep{campbell2011roleallocation} survey the field and
articulate its open issues. More recent work brings a human into
the reallocation loop: Al-Hussaini et
al.~\citep{alhussaini2024reallocation} generate task-reallocation
suggestions for a human supervisor to approve under
contingencies. These constructions choose roles for agents under
an objective (efficiency, makespan, robustness), and even when a
human approves the choice they do not bind the role change to an
offline-verifiable authorisation record: the decision is the
final word and leaves no cryptographic provenance. Our
construction layers a governance gate above the choice, leaving
the algorithm to propose and adding a cryptographic,
privilege-monotone authorisation surface.

\paragraph{Organisational MAS and Reorganisation.}
The deontic dimension of organisational MAS attaches
permissions and obligations to roles. The MOISE+
model~\citep{hubner2002moiseplus} specifies structural,
functional, and deontic facets of agent organisations;
the broader paradigm
survey of Horling and Lesser~\citep{horling2004orgparadigms}
positions teams, coalitions, hierarchies, and federations.
The privilege lattice of \S\ref{sec:taxonomy} can be read as a
runtime, cryptographically enforceable instance of the
\emph{permission} facet of MOISE+'s deontic specification: it
governs which actions a caste may perform, while the richer
obligation relations of MOISE+ are out of scope. Closest to our
framing of role change as a managed, authorised event is the
MOISE+ \emph{reorganisation} framework of H\"ubner et
al.~\citep{hubner2004reorg}, which treats organisational change
as a first-class cooperative process; we add the asymmetric
authorisation gate and the cryptographic audit record that
reorganisation leaves unspecified.

\paragraph{Adjustable Autonomy and Human Supervisory Control.}
The closest conceptual analogue to operator countersignature
is adjustable autonomy. Scerri, Pynadath and
Tambe~\citep{scerri2002adjustableautonomy} model the decision
of when an agent transfers control to an external human
authority as an MDP-style policy choice, and the human
supervisory control of multi-robot teams surveyed by Chen et
al.~\citep{chen2011supervisory} establishes the
operator-in-the-loop setting our countersignature lives in. The
bounded-relax path of \S\ref{sec:protocol} can be viewed as an
asymmetric, privilege-monotone instance of transfer-of-control
where the trigger is fixed (any relaxation along an
operator-bounded axis), the authority is unambiguous (the
operator's signing key), and the act produces a cryptographically
verifiable audit record. Our delta over MDP-style adjustable
autonomy is the offline-verifiable cause-chain and the
per-axis budget construction.

\paragraph{Normative MAS and Electronic Institutions.}
Boella, van der Torre and Verhagen~\citep{boella2006normativemas}
introduce normative multi-agent systems and the
norm-monitoring-and-sanctioning machinery that follows, and the
electronic-institutions line~\citep{esteva2001einst}, with its
runtime norm enforcement~\citep{garciacamino2005norms}, supplies
the canonical framework for imposing external authority on agent
interactions. Caste reassignment under operator countersignature
is a regulative-norm instance: the norm constrains which
transitions are admissible, and the operator signature is the
monitoring authority. The cause-chain is the offline-verifiable
\emph{monitoring and evidence} record, not a sanction; what
follows from a detected violation (the sanctioning step) is a
deployment-policy decision we leave out of scope. Our
contribution is the cryptographic realisation of this monitoring
layer in a multi-robot setting.

\paragraph{Multi-Robot Authorisation Infrastructure and Blockchain Swarms.}
SROS2~\citep{sros2_2022} provides per-node certificate
issuance and signed transport, with attribute-based access
control extensions~\citep{abac2023multirobot} for per-robot
authorisation, and the privilege lattice itself instantiates the
lattice-based access-control tradition~\citep{sandhu1993lattice}.
The blockchain-swarm
literature~\citep{strobel2020blockchain,castelloferrer2024review}
addresses inter-robot trust under a flat-Byzantine assumption
without operator anchor. Caste handling, where present in those
constructions, sits inside internal allocation and is not
exposed to external authority. Our construction operates above
the SROS2 substrate and outside the blockchain-internal-trust
posture, with operator-authority caste authorisation as its
distinguishing axis.

\paragraph{Biological Caste Plasticity.}
Eusocial insect colonies exhibit caste plasticity at the level of
individual workers~\citep{wilson1971insectsocieties,
robinson1992agepolyethism,bonabeau1999swarmintelligence}: a worker
bee progresses from nurse to forager as it ages (age polyethism)
and may shift caste under colony-level demands. The biological
process is homeostatic and developmental; we adopt the caste
vocabulary and the empirical observation of frequent reassignment,
but the authorisation construct is engineered rather than
inherited.

\paragraph{Programme Context.}
The protocol generalises a single-agent persona-mutation
governance gate~\citep{aeros-p15} from one robot to a fleet,
and is anchored in the federation primitive
of~\citep{aeros-p16}. The broader runtime-governance posture
the construction inherits (admission, policy, exception
monitoring, recovery, and human takeover as a separable layer
above the agent) is established in~\citep{aeros-p3} at the
single-agent level; the present paper extends that posture to
the multi-robot setting. To delineate what is reused from what is
new: the single-agent asymmetric gate, the operator-signed
identity manifest, and the BLS-aggregated Merkle audit chain are
taken from~\citep{aeros-p15,aeros-p16}; this paper contributes
the lattice-structured \emph{per-axis} relaxation budget anchored
at the operator-reset baseline (Definition~\ref{def:lattice},
Proposition~\ref{prop:laundering}), the swarm-quorum lift of the
gate (Proposition~\ref{prop:tighten-quorum}), the caste-laundering
containment argument, and the upstream-signature separation in the
cause-chain record. The properties this paper proves are proved
here against the model of \S\ref{sec:fleetmodel}, not inherited.

\section{Method}\label{sec:method}

We present the construction in four parts: a fleet and threat model,
a privilege-lattice taxonomy of caste transitions, the three-path
asymmetric-trust protocol, and the cause-chain record that makes
every transition auditable.

\subsection{Fleet and Threat Model}\label{sec:fleetmodel}

We consider a fleet of $N$ robots communicating over authenticated,
integrity-protected channels. The per-node certificates and signed
transport of SROS2~\citep{sros2_2022} supply message authentication but
not agreement, so the agreement layer is stated here rather than
inherited. Among the $N$ members at most $f$ may be Byzantine, with the
standard bound $f = \lfloor (N-1)/3 \rfloor$; honest members follow the
protocol and Byzantine members deviate arbitrarily. A \emph{tighten}
record is admitted only when a quorum of $\lceil (N+f+1)/2 \rceil$
members acknowledge it, the classical authenticated-Byzantine quorum
under which any two quorums intersect in at least one honest
member~\citep{lamport1982byzantine}; bounded-relax and operator-only
records are instead gated by the operator countersignature rather than a
fleet quorum (\S\ref{sec:protocol}), and the reference implementation
enforces the quorum on exactly the tighten path
(Proposition~\ref{prop:tighten-quorum}). This quorum is a local admission
rule that supplies honest endorsement of a tighten record; the replicated
layer that extends it to a fork-free total order across the fleet, and its
agreement and fork-exclusion guarantee, are given in \S\ref{sec:replication}
(Proposition~\ref{prop:fork-exclusion}) and evaluated in \S\ref{sec:rq5}. An
external \emph{operator} holds a
signing key whose public half every member knows; the operator is the
sole authority for relaxation countersignatures and operator-only
certificates. We use ``asymmetric trust'' in a structural, not
reputational, sense: tightening is trusted to the fleet quorum, and
relaxation is trusted only to the operator. This differs from the
computational trust-and-reputation models surveyed
by~\citet{pinyol2013trust}, which estimate peer trustworthiness from
observed behaviour. Here trust placement is fixed by the privilege
direction of the transition and does not depend on any peer's history.

\paragraph{Threat model.}\label{sec:threats}
We assume an adversary attempting to misuse the caste
construction in four explicit patterns:

\begin{enumerate}[leftmargin=*, itemsep=1pt]
\item \emph{Caste laundering.} An adversary-controlled robot
  proposes a tighten (to a lower-privilege caste) and then,
  before the audit chain commits, proposes a relax of the same
  magnitude back to (or beyond) the original caste, hoping the
  net change appears null while the relaxation is unaudited.
\item \emph{Repeated-relaxation privilege escalation.} An
  adversary-controlled robot repeatedly proposes small bounded
  relaxations within an operator-pre-committed envelope, each
  of which the operator countersigns under a fast path, but
  whose cumulative effect exceeds the operator's intended
  envelope.
\item \emph{Operator impersonation.} An adversary forges a
  countersignature on a relaxation request, hoping fleet
  members accept the forged signature as authoritative.
\item \emph{Cause-chain forgery.} An unauthorized party (here a
  registered robot or the adversary, not the designated sensor) signs a
  cause record (e.g., ``battery low at $t$'') and submits it as the
  triggering observation, hoping to justify an unwarranted tighten or
  relax. The protocol's defence is provenance, not physical truth: it
  establishes that the triggering observation carries a valid signature
  from an \emph{authorized sensor}, so a self-signed or adversary-signed
  cause is rejected. It does not certify that a correctly-signed
  observation is physically true, fresh, or unique; a compromised
  authorized sensor and stale-observation replay are separate threats
  outside the scope stated below.
\end{enumerate}

The four patterns above scope the \emph{admission-surface} claims; audit-log
equivocation across replicas is handled separately by the distributed layer
(\S\ref{sec:replication}), which detects a Byzantine equivocator and excludes
forks (evaluated in \S\ref{sec:rq5}). Threats outside this combined set
(signature replay, stale observations, operator-reset abuse, and a
compromised but authorized sensor) are material for a full deployment but are
not modelled or refuted here; we return to them in \S\ref{sec:discussion}.
The evaluation combines conformance and gate-removal ablations (RQ3) with a
randomized fuzz adversary that mutates otherwise-valid records
(\S\ref{sec:rq3b}) and a distributed Byzantine-equivocation experiment
(\S\ref{sec:rq5}), so it is not purely a fixed-case conformance study.

\paragraph{Cryptographic substrate.}
The construction is self-contained on standard primitives. Each
principal (every robot, the operator, and each sensor) holds an
Ed25519 signing key~\citep{bernstein2012ed25519}; an operator-signed
\emph{identity manifest} binds each robot to its public key and records
the privilege lattice (\S\ref{sec:taxonomy}). The audit chain is an
append-only log, hash-chained for tamper evidence and committed per
epoch to a Merkle root~\citep{merkle1988signature} that the operator
signs; an inclusion proof binds any record to that root, and an external
auditor verifies the chain, the proof, and every embedded signature
using only public keys. The reference implementation
(\S\ref{sec:eval}) realises exactly this substrate. Where a fleet wishes
to compress many acknowledgements into one, the federation primitive
of~\citep{aeros-p16} can aggregate signatures (for example via
BLS~\citep{boneh2004bls}); we treat aggregation as an optional
optimisation and do not depend on it for any property proved here.

\subsection{Caste Reassignment Taxonomy}\label{sec:taxonomy}

We first define the privilege lattice and its relaxation
magnitude metric, then partition the space of caste transitions
into three governance classes by their privilege effect.

\begin{definition}[Privilege lattice]\label{def:lattice}
Let $\Sigma$ be a universe of actions a robot can request and
$\mathcal{A}$ a finite set of named \emph{relaxation axes}
(e.g., battery budget, payload weight, mission risk). A
\emph{privilege lattice} is a triple
$\mathcal{L} = (P, \preceq, \mathit{actions})$ where $P$ is the
set of castes, $\preceq$ is a partial order on $P$, and
$\mathit{actions}: P \to 2^\Sigma$ is monotone in $\preceq$,
i.e., $c \preceq c'$ implies
$\mathit{actions}(c) \subseteq \mathit{actions}(c')$.
For each axis $\alpha \in \mathcal{A}$ the operator declares at
deployment time a magnitude function
$\Delta_\alpha: P \times P \to \mathbb{R}_{\ge 0}$ such that
$\Delta_\alpha(c, c') > 0$ iff the transition $c \to c'$ relaxes
along axis $\alpha$. The lattice and the magnitude functions
together are part of the operator-signed identity manifest
(\S\ref{sec:fleetmodel}).
\end{definition}

The partial order $\preceq$ with monotone $\mathit{actions}$ is the
multi-robot, runtime instance of a lattice-based access-control
model~\citep{sandhu1993lattice,sandhu1996rbac}: a caste plays the role
of a security label and tightening is a downgrade along the lattice.
The magnitude $\Delta_\alpha$ is the metric the operator bounds. At
deployment the operator pre-commits an \emph{envelope}
$E \subseteq \mathcal{A}$ of relaxable axes and a \emph{per-axis bound}
$B_\alpha \in \mathbb{R}_{\ge 0}$ for each $\alpha \in E$; the triple
$(E, \{B_\alpha\}, \Delta_\alpha)$ is the bounded-relax authorisation
state used in Section~\ref{sec:protocol}. The bound caps the elevation
of the \emph{target} caste above the most recent
\emph{operator-reset baseline} $c_\star$: a target $c'$ is admissible
along axis $\alpha$ iff $\Delta_\alpha(c_\star, c') \le B_\alpha$. This
is a fixed envelope, not a depleting counter; because it is charged
from $c_\star$ rather than the robot's current caste, no transient
tighten can free headroom (Proposition~\ref{prop:laundering}) and no
sequence of relaxations can exceed it
(Proposition~\ref{prop:relax-containment}). A separate, genuinely
depleting \emph{ratification token} governs only the fast path of the
bounded-relax protocol and is defined in Section~\ref{sec:protocol}. For battery the operator might
declare $\Delta_{\text{battery}}(c, c') = \max(0, \text{busy}(c') -
\text{busy}(c))$ in expected watt-hours per shift; for payload it might
be the additional maximum weight $c'$ may carry. The protocol is
agnostic to the units; it depends only on $\Delta_\alpha$ being
operator-signed and bounded.

\begin{definition}[Caste reassignment classes]\label{def:classes}
Let $O \subseteq P$ be the set of castes the operator has explicitly
reserved for case-by-case authorisation. A caste transition $c \to c'$
is classified, in this priority order, as:
\begin{itemize}[leftmargin=*, itemsep=1pt]
\item \emph{Operator-only} if $c' \in O$.
  Examples: medication-delivery caste in a hospital;
  weapons-bearing caste in defence; passenger-transport caste
  in a logistics fleet.
\item \emph{Auto-tighten} if $c' \notin O$ and $\mathit{actions}(c')$
  is a strict subset of $\mathit{actions}(c)$ (every action $c'$ can
  perform, $c$ can perform; some actions $c$ can perform, $c'$ cannot).
  Examples: logistics $\to$ patrol; manipulator $\to$ observer.
\item \emph{Bounded-relax} if $c' \notin O$, the transition is not a
  tighten, and every axis it relaxes along lies in the
  operator-pre-committed envelope $E$.
  Examples: patrol $\to$ logistics within the battery-budget
  threshold; observer $\to$ light manipulator within the
  payload-weight threshold.
\item \emph{Operator-only} (default) in every remaining case: a
  transition between incomparable castes, or one that relaxes along an
  axis outside $E$, falls here. This makes the classification total: the
  safe default for anything the operator did not pre-authorise is to
  route it through explicit operator admission.
\end{itemize}
\end{definition}

The privilege lattice is operator-declared at deployment time
as part of the identity manifest~\citep{aeros-p16}. Each caste
is annotated with the action set it can perform and the axes
along which it is relaxable. A pure relabel that changes neither
the action set nor any relaxation axis is, by
Definition~\ref{def:classes}, neither tighten nor bounded-relax
and routes to operator-only by default; a privilege-neutral caste
change therefore still produces an audited operator decision. The classification of a proposed
transition is mechanical: a fleet member can compute it locally
given the lattice and does not need to consult the operator to
decide whether a transition is tighten, bounded-relax, or
operator-only. What the operator countersignature gates is the
\emph{relaxation} (and not the classification), and what the
operator certificate issues is the \emph{operator-only}
admission (and not the classification).

The taxonomy is asymmetric. Tightening is structurally safer
than relaxing, and the protocol exploits this asymmetry to keep
the operator's authorisation cadence tractable: tightening
events do not consume operator attention, only relaxation does.

\subsection{Asymmetric-Trust Protocol}\label{sec:protocol}

The protocol has three paths corresponding to the three classes
of Definition~\ref{def:classes}. Each path produces an
auditable cause-chain record (\S\ref{sec:causechain}) anchored
in the fleet-level audit chain. The protocol steps below describe
the distributed form in which each member appends to a local
replica. The latency and attack experiments (RQ1--RQ4) run this
chain as a single logical append-only log with the same hash-chain
and Merkle commitment; the fully replicated $N$-member form, with
quorum-committed total order and fork exclusion, is specified in
\S\ref{sec:replication} and evaluated separately in \S\ref{sec:rq5}.

\paragraph{Tighten path.}
When a robot $i$ proposes an auto-tighten transition $c \to c'$,
the protocol completes in one round trip among fleet members:

\begin{enumerate}[leftmargin=*, itemsep=1pt]
\item Robot $i$ signs a tighten-request record:
  $\langle \mathit{cause},\ c,\ c',\ t,\ \sigma_i \rangle$.
\item Each fleet member verifies that $c \to c'$ is auto-tighten
  under the privilege lattice (Definition~\ref{def:classes}).
\item Members append the record to their local audit chain and
  return an acknowledgement.
\item Robot $i$ adopts caste $c'$ on receiving a quorum of
  acknowledgements.
\end{enumerate}

Tightening does not require operator countersignature. The
correctness property is that tighten is monotone in the
privilege lattice: a sequence of tighten transitions never
results in higher privilege than the starting caste.

\paragraph{Bounded-relax path.}
When robot $i$ proposes a bounded-relax transition $c \to c'$
within an operator-pre-committed envelope, the protocol
completes in two round trips:

\begin{enumerate}[leftmargin=*, itemsep=1pt]
\item Robot $i$ signs a relax-request record carrying the
  triggering observation (e.g., battery state, payload
  measurement, time elapsed since last relaxation).
\item Each fleet member verifies that the request is within the
  envelope: every axis it relaxes along is on the operator's
  pre-committed list, and the elevation of the target caste above
  the operator-reset baseline is within the per-axis bound
  $B_\alpha$ (Definition~\ref{def:lattice}).
\item The operator countersigns the record. This takes a
  \emph{short-circuit form} when a pre-issued \emph{ratification
  token}, a finite depleting headroom of relaxation magnitude,
  can cover the request; the token is then decremented. When the
  token cannot cover the request the countersignature takes a
  \emph{live form} (an interactive operator signature) and the
  operator reissues the token. The token gates latency, not
  security: the envelope bound of step 2 holds on every admitted
  relaxation regardless of the token's state.
\item Fleet members append the countersigned record to the
  audit chain; robot $i$ adopts caste $c'$.
\end{enumerate}

The envelope check on step 2 is what prevents
repeated-relaxation escalation (threat 2 of
\S\ref{sec:threats}): because every admitted target must lie
within $B_\alpha$ of the operator-reset baseline, no sequence of
relaxations, however long, reaches a caste beyond the envelope,
and only an operator reset moves the baseline.

\paragraph{Operator-only path.}
When robot $i$ requires admission into an operator-only caste
$c'$, the protocol degenerates to an explicit operator issuance:

\begin{enumerate}[leftmargin=*, itemsep=1pt]
\item Robot $i$ signs a request record naming the operator-only
  caste and the justification.
\item The operator inspects the request out of band and either
  issues a fresh operator-signed caste binding (an operator
  countersignature over the record) or denies the request.
\item If issued, the operator-signed binding supersedes the
  robot's previous caste and is recorded in the audit chain. The
  robot adopts $c'$.
\end{enumerate}

No automatic admission path exists for operator-only castes.
Their distinguishing engineering property is that the operator
is in the synchronous critical path of admission. The reference
implementation models the binding as this operator countersignature;
certificate expiry and revocation follow standard public-key
practice and are out of scope.

\begin{figure}[t]
\centering
\includegraphics[width=\linewidth]{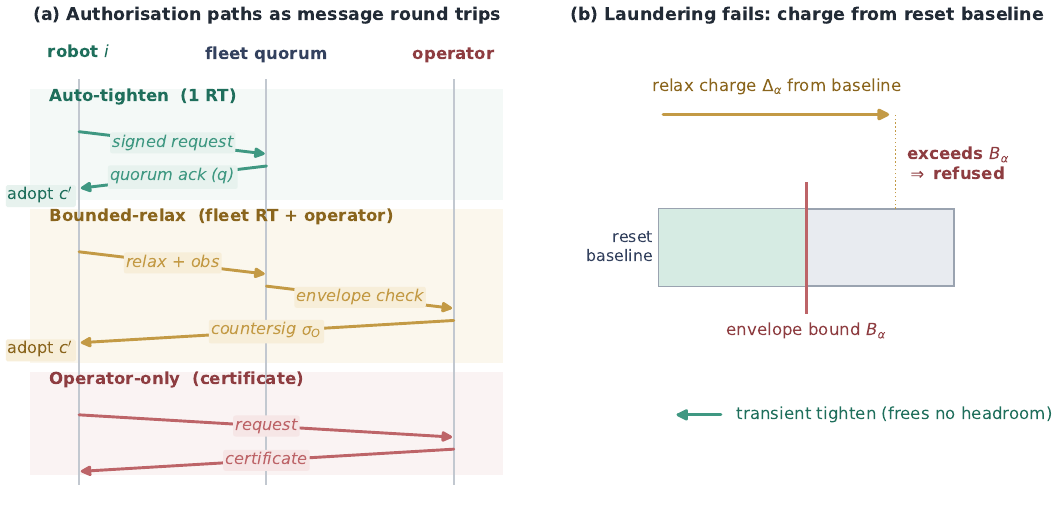}
\caption{Left (a): the three authorisation paths as message round
trips. Auto-tighten completes in one fleet round trip and a quorum of
$\lceil (N{+}f{+}1)/2 \rceil$ acknowledgements; bounded-relax adds an
operator countersignature ($\sigma_O$); operator-only is an explicit
certificate issuance. Right (b): the per-axis budget meter. The relax
charge $\Delta_\alpha$ is measured from the operator-reset baseline, so
a transient tighten frees no headroom and a relax that would exceed
$B_\alpha$ is refused. This is the mechanism behind
Proposition~\ref{prop:laundering}.}
\label{fig:paths}
\end{figure}

\paragraph{Properties.}
We now state and prove the protocol's guarantees.

\begin{proposition}[Tighten safety]\label{prop:tighten-safety}
Any sequence of tighten transitions starting from caste $c_0$
results in a caste $c_k$ such that
$\mathit{actions}(c_k) \subseteq \mathit{actions}(c_0)$.
\end{proposition}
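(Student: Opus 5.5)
The plan is a straightforward induction on the length $k$ of the tighten sequence, using only Definition~\ref{def:classes} and transitivity of set inclusion. Write the sequence as $c_0 \to c_1 \to \cdots \to c_k$, where each step $c_{j-1} \to c_j$ was admitted on the tighten path. First I will extract the one-step fact. By Definition~\ref{def:classes}, a transition is classified auto-tighten only if $c_j \notin O$ and $\mathit{actions}(c_j) \subsetneq \mathit{actions}(c_{j-1})$. The tighten path (step 2) admits a record only after members verify exactly this classification. Hence every admitted step satisfies $\mathit{actions}(c_j) \subseteq \mathit{actions}(c_{j-1})$.

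The induction then runs as follows. For the base case $k=0$ we have $c_k = c_0$, and the inclusion is trivial. For the inductive step, assume $\mathit{actions}(c_{k-1}) \subseteq \mathit{actions}(c_0)$. Combining this with the one-step fact $\mathit{actions}(c_k) \subseteq \mathit{actions}(c_{k-1})$ gives $\mathit{actions}(c_k) \subseteq \mathit{actions}(c_0)$ by transitivity of $\subseteq$. I will also note the strict version: if $k \ge 1$, the inclusion is strict, since the first step is strict. This is stronger than the statement requires.

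I expect the only real subtlety to be the bridge from ``a tighten transition'' to ``the action-set inclusion holds,'' not the induction itself. The argument needs two things. First, the classification is evaluated on the robot's actual current caste $c_{j-1}$, so the record's source field matches the adopted caste. Second, the priority order in Definition~\ref{def:classes} does not let a non-inclusion transition be labelled tighten. The second point is immediate, because the tighten clause explicitly requires the strict-subset condition. For the first point, I would state it as the standing interpretation of ``sequence of tighten transitions'': consecutive records chain, with the target of one equal to the source of the next.

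The argument does not need the partial order $\preceq$ or the monotonicity of $\mathit{actions}$ in $\preceq$, because the tighten class is defined directly on action sets. I will remark on this so the result is not mistaken for a claim about $\preceq$-descent.
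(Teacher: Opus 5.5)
Your proposal is correct and matches the paper's proof: both take the strict inclusion $\mathit{actions}(c_j) \subsetneq \mathit{actions}(c_{j-1})$ that Definition~\ref{def:classes} imposes on every admitted tighten step and chain these inclusions by transitivity of $\subseteq$; your explicit induction on $k$ simply spells out that chaining. Your side remarks, that the inclusion is strict for $k \ge 1$ and that the order $\preceq$ is never used, are both accurate.
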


\begin{proof}
Each admitted tighten step $c_{j-1} \to c_j$ satisfies
$\mathit{actions}(c_j) \subsetneq \mathit{actions}(c_{j-1})$ by
Definition~\ref{def:classes}. Chaining the inclusions over
$j = 1, \dots, k$ gives
$\mathit{actions}(c_k) \subseteq \mathit{actions}(c_0)$ by
transitivity of $\subseteq$.
\end{proof}

\begin{proposition}[Bounded-relax containment]
\label{prop:relax-containment}
Fix an operator-reset baseline caste $c_\star$, an envelope
$E \subseteq \mathcal{A}$, and per-axis bounds
$\{B_\alpha\}_{\alpha \in E}$. The protocol admits the full
$k$-step sequence of relax transitions
$c_\star = c_0 \to c_1 \to \dots \to c_k$ that
relaxes only along axes in $E$ if and only if the per-axis
\emph{prefix invariant}
\[
\Delta_\alpha(c_\star, c_j) \le B_\alpha
\quad\text{for every }\alpha \in E\text{ and every prefix }j \le k
\]
holds. The per-axis bounds are independent: the bound $B_\alpha$ does
not constrain $B_\beta$ for $\beta \ne \alpha$. The baseline $c_\star$ is
the operator-declared baseline for the current epoch and is fixed
throughout it; it changes only when the operator declares a new signed
baseline (a \emph{reset}), which begins a new epoch with its own signed
Merkle root against which that epoch's records are audited. The bound is a
fixed limit on elevation above $c_\star$, not a counter that usage
depletes. The reference implementation fixes one such baseline per
envelope and audits each epoch against its declared baseline; dynamic
in-log rebaselining within a single epoch is not separately modelled
(\S\ref{sec:discussion}).
\end{proposition}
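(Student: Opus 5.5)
The plan is to prove the biconditional by induction on the sequence length $k$, reading the bounded-relax admission rule of \S\ref{sec:protocol} (step 2) as a per-step predicate. The predicate is the admission check the protocol performs on transition $c_{j-1}\to c_j$. The transition is admitted iff three conditions hold: it is classified bounded-relax (not operator-only, not a tighten, every relaxed axis lies in $E$), the robot and operator signatures verify, and $\Delta_\alpha(c_\star, c_j)\le B_\alpha$ for every $\alpha\in E$. The hypotheses already grant that each step relaxes only along axes in $E$. We also assume, as the statement implicitly does, that signatures are honestly produced and each $c_j\notin O$. With these assumptions, admission of step $j$ reduces exactly to the bound check on the target $c_j$. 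The key structural fact, taken from Definition~\ref{def:lattice} and the text after it, is that the charge is measured from the fixed baseline $c_\star$ rather than from $c_{j-1}$. Because of this, the check on step $j$ depends only on $c_j$ and not on the history.

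\emph{($\Rightarrow$)} Suppose all $k$ steps are admitted. For each $j\ge1$, admission of step $j$ gave $\Delta_\alpha(c_\star,c_j)\le B_\alpha$ for all $\alpha\in E$. For $j=0$ we need $\Delta_\alpha(c_\star,c_\star)\le B_\alpha$. Here $\Delta_\alpha(c_\star,c_\star)=0$, since the identity transition relaxes along no axis and Definition~\ref{def:lattice} makes $\Delta_\alpha>0$ iff $c\to c'$ relaxes along $\alpha$. Since $B_\alpha\ge 0$, the $j=0$ case holds, and the prefix invariant follows.

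\emph{($\Leftarrow$)} Induct on $k$. If the invariant holds for all prefixes $j\le k$, then by the induction hypothesis the first $k-1$ steps are admitted. The $k$-th step passes the envelope check because $\Delta_\alpha(c_\star,c_k)\le B_\alpha$. Its baseline is still $c_\star$, because the statement fixes $c_\star$ throughout the epoch and no reset intervenes. Hence step $k$ is admitted.

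The independence clause needs a separate argument. The admission predicate is a conjunction over $\alpha\in E$ of conditions that each mention only $\Delta_\alpha$ and $B_\alpha$. Changing $B_\beta$ therefore alters only the $\beta$-conjunct and leaves the set of targets satisfying the $\alpha$-conjunct unchanged. The remark that the bound is not a depleting counter follows from the same history-independence: the admissibility of $c_j$ is a function of $(c_\star,c_j)$ alone. I will also note, for contrast, that the ratification token does deplete but affects only which countersignature form is used, not admission, per step 3.

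The main obstacle is making ``the protocol admits'' precise enough to be a clean predicate. In particular, the biconditional must not silently depend on the token state, the live-versus-short-circuit form, or the operator's out-of-band discretion. I would handle this by fixing the interpretation that admission means passing the mechanical step-2 check plus a valid countersignature of either form. I would also state explicitly that the operator's countersignature is assumed available in the live path, so that the only refusals are envelope refusals. Without that convention the ``if'' direction fails trivially whenever the operator declines.
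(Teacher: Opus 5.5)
Your proposal is correct and follows essentially the paper's argument. Both reduce admission of step $j$ to the step-2 check $\Delta_\alpha(c_\star,c_j)\le B_\alpha$, charged from the fixed baseline $c_\star$; both then get each direction step by step, with the paper arguing $(\Rightarrow)$ by contrapositive from the first violating prefix where you argue it directly; and both read independence off the per-axis conjunction. Your explicit treatment of the $j=0$ prefix via $\Delta_\alpha(c_\star,c_\star)=0$, and your stated conventions (signatures verify, each $c_j\notin O$, the operator countersigns), make precise assumptions that the paper's proof leaves implicit.
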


\begin{proof}
($\Leftarrow$) Suppose the prefix invariant holds. At step $j$, the
admission check of the bounded-relax path
(\S\ref{sec:protocol}, step~2) evaluates
$\Delta_\alpha(c_\star, c_j) \le B_\alpha$ for each $\alpha \in E$,
which holds by assumption, so every step is admitted.
($\Rightarrow$) Suppose some prefix violates the invariant, and let
$j$ be the first such prefix, with $\Delta_\beta(c_\star, c_j) > B_\beta$
for some axis $\beta$. The step-2 check at step $j$ evaluates exactly
this quantity and rejects, so the sequence is not admitted past
$c_{j-1}$. Independence is immediate because each axis is checked
against its own bound, and the charge $\Delta_\alpha(c_\star, \cdot)$ is
measured from the fixed epoch baseline $c_\star$, changing only when the
operator declares a new baseline for a subsequent epoch.
\end{proof}

\begin{proposition}[Caste-laundering prevention]
\label{prop:laundering}
Let $c_\star$ be the operator-reset baseline. A tighten transition
$c_\star \to c'$ (with $\mathit{actions}(c') \subsetneq
\mathit{actions}(c_\star)$) followed by a relax $c' \to c''$ is charged
$\Delta_\alpha(c_\star, c'')$ on every axis $\alpha$, the same charge as
a direct relax $c_\star \to c''$. The intervening tighten therefore
grants no budget headroom, and the relax is admitted only if
$\Delta_\alpha(c_\star, c'') \le B_\alpha$ for all $\alpha$.
\end{proposition}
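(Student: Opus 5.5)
The plan is to read the claim directly off the admission rule for the bounded-relax path rather than to compute anything about the lattice. The key observation is that the step-2 check of the bounded-relax path (\S\ref{sec:protocol}) never looks at the robot's current caste when it computes the charge. For each axis it evaluates $\Delta_\alpha(c_\star, c_{\mathrm{target}}) \le B_\alpha$, where $c_\star$ is the epoch baseline. It does not evaluate $\Delta_\alpha(c_{\mathrm{current}}, c_{\mathrm{target}})$. So the first step is to state precisely what quantity the gate charges for an admitted relax $c' \to c''$. By Definition~\ref{def:lattice} and the envelope discussion following it, that quantity is $\Delta_\alpha(c_\star, c'')$ on every axis $\alpha$.

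The second step is to check that the intervening tighten $c_\star \to c'$ does not move the baseline. The tighten path (\S\ref{sec:protocol}) consists only of a robot-signed record, a local classification check, and a quorum of acknowledgements. It carries no operator signature and declares no new signed baseline. As Proposition~\ref{prop:relax-containment} states, the baseline changes only by an operator reset, which begins a new epoch. Hence after the tighten the baseline is still $c_\star$, and the charge for $c' \to c''$ is $\Delta_\alpha(c_\star, c'')$. This is literally the same expression the gate would evaluate for a direct relax $c_\star \to c''$, so the two charges coincide axis by axis.

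The third step is the admission condition. The two-step sequence $c_\star \to c' \to c''$ is a prefix sequence from $c_\star$. The tighten prefix passes the gate trivially, since it routes through the tighten path and consumes no budget. I would apply Proposition~\ref{prop:relax-containment} to the relax step, which gives admission if and only if $\Delta_\alpha(c_\star, c'') \le B_\alpha$. One technical point needs care here. Proposition~\ref{prop:relax-containment} is stated for sequences of relax transitions, so I would not invoke it verbatim. Instead I would cite the step-2 check directly for the single relax, which is the argument that proposition's proof itself uses.

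The main obstacle is the quantifier ``for all $\alpha$''. Axes outside the envelope $E$ have no bound $B_\alpha$. I would handle them using the classification in Definition~\ref{def:classes}. If $c' \to c''$ relaxes along some axis outside $E$, it is classified operator-only and never reaches the bounded-relax gate at all, so the claim concerns only the bounded-relax case. In that case $\Delta_\alpha(c', c'') = 0$ for $\alpha \notin E$. The statement's condition should then be read as ranging over $\alpha \in E$, with $B_\alpha$ undefined and the path unavailable otherwise. I would make this reading explicit rather than try to prove something stronger. The proof then reduces to the observation that the charge is a function of $(c_\star, c'')$ alone, so no intervening caste can alter it.
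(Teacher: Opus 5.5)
Your proposal is correct and follows the paper's own argument: the charge $\Delta_\alpha(c_\star, c'')$ depends only on the fixed epoch baseline and the target, so the intervening tighten cannot change it, and admission reduces to the step-2 envelope check at $c''$. Your extra care makes explicit what the paper's proof leaves implicit. You note that the tighten path declares no new baseline, you apply the step-2 check directly rather than Proposition~\ref{prop:relax-containment} (which is stated for pure-relax sequences), and you restrict the quantifier to $\alpha \in E$.
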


\begin{proof}
By Definition~\ref{def:lattice} the budget charge for reaching $c''$ is
$\Delta_\alpha(c_\star, c'')$, computed from the reset baseline rather
than from the current caste $c'$. This value is independent of the path
taken to $c''$, so the tighten to $c'$ leaves it unchanged. Admission
then reduces to the prefix invariant of
Proposition~\ref{prop:relax-containment} evaluated at $c''$. Had the
charge instead been measured from $c'$, the adversary could
manufacture headroom by tightening far and relaxing back in budgeted
steps; anchoring at $c_\star$ removes exactly this freedom.
\end{proof}

\begin{proposition}[Tighten-path quorum]
\label{prop:tighten-quorum}
Tightening proceeds without operator countersignature but is not
unilateral. Under the authenticated-Byzantine fleet model of
\S\ref{sec:fleetmodel} (at most $f = \lfloor (N-1)/3 \rfloor$ Byzantine
members), a tighten record enters the audit chain only when at least
$\lceil (N+f+1)/2 \rceil$ members acknowledge it. No single rogue robot
can then forge a tighten record without the endorsement of at least one
honest peer.
\end{proposition}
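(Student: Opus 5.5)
The argument splits into a structural half and a counting half. For the structural half (``not unilateral'', ``only when at least $\lceil (N+f+1)/2 \rceil$ members acknowledge''), I would read the admission rule of the tighten path directly. Step~4 of the tighten path makes adoption, and step~3 makes chain entry, conditional on collecting a quorum of acknowledgements. Channels are authenticated in the model of \S\ref{sec:fleetmodel}, so each acknowledgement is attributable to a distinct member and cannot be fabricated by the proposer. Hence no record reaches the chain without $q := \lceil (N+f+1)/2 \rceil$ distinct signed acknowledgements, and no operator countersignature appears anywhere on this path.

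For the counting half, I would show that any set $S$ of $q$ distinct acknowledgers contains at least one honest member. At most $f$ members are Byzantine, so it suffices to prove $q > f$, i.e.\ $q \ge f+1$. Since $q \ge (N+f+1)/2$, it is enough that $(N+f+1)/2 \ge f+1$, which is equivalent to $N \ge f+1$. This holds because $f = \lfloor (N-1)/3 \rfloor \le (N-1)/3 < N$ for every $N \ge 1$.

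A stronger margin is also available and worth recording. Using $N \ge 3f+1$, we get $q \ge (4f+2)/2 = 2f+1$, so every admitted tighten carries at least $f+1$ honest endorsements. The quorum-intersection remark of \S\ref{sec:fleetmodel} follows from the same arithmetic: two quorums overlap in at least $2q - N \ge f+1$ members, so they share at least one honest member.

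The main obstacle is not the arithmetic but pinning down what ``honest endorsement'' certifies. An honest member acknowledges only after step~2, i.e.\ after checking $\sigma_i$ against the identity manifest and verifying that $c \to c'$ is auto-tighten under Definition~\ref{def:classes}. A rogue robot (or coalition of up to $f$) therefore cannot get a record admitted that an honest verifier would reject, for instance a disguised relax or one bearing another robot's identity. I would state this dependence explicitly, and note that it rests on unforgeability of Ed25519 and on the fixed bound $f$. I would also note that the proposition is a per-record admission claim; cross-replica agreement is deferred to Proposition~\ref{prop:fork-exclusion}.
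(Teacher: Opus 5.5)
Your proof is correct, but its load-bearing counting step differs from the paper's.

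The paper argues by quorum intersection. From $2q > N+f$ it infers that any two quorums share more than $f$ members, hence an honest one, and then concludes that an admitted record carries an honest endorsement. That conclusion concerns a single quorum, so the intersection bound delivers it only when specialised to a quorum intersected with itself (equivalently, $q > N+f-q \ge f$ using $q \le N$).

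You instead prove exactly the property the conclusion needs, $q \ge f+1$, using only $N \ge f+1$. You then sharpen it via $N \ge 3f+1$ to $q \ge 2f+1$, so every admitted tighten carries at least $f+1$ honest endorsements, which is more than the proposition states. Your route also makes visible that a single honest endorsement would already be guaranteed by $f+1$ acknowledgements. The larger size $\lceil (N+f+1)/2 \rceil$ is what buys two-quorum intersection, which matters for agreement in Proposition~\ref{prop:fork-exclusion}; you rightly record it as a by-product rather than the key step. What the paper's framing buys is uniformity: it anchors the tighten quorum in the same standard intersection argument that the replicated layer reuses.

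One correction to your structural half. Step~3 of the tighten path has each member append the record to its local chain and acknowledge it before any quorum exists, so it does not make chain entry conditional on a quorum. The ``only when $q$ members acknowledge'' clause is the admission rule stated in \S\ref{sec:fleetmodel}. The implementation realises it through the proposer's check of $q$ distinct valid signatures and the persisted quorum certificate. The paper's proof takes this rule as given, and you should cite it rather than step~3.
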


\begin{proof}
A quorum of size $q = \lceil (N+f+1)/2 \rceil$ satisfies $2q > N + f$,
so any two quorums share more than $f$ members and therefore at least
one honest member, the standard quorum-intersection
argument~\citep{lamport1982byzantine}. Since each honest member
acknowledges a record only after verifying it is a valid tighten under
the lattice (Definition~\ref{def:classes}), an admitted record carries
at least one honest endorsement.
\end{proof}

Totally ordering conflicting bindings for the same robot (excluding a
fork) additionally requires an append-only per-robot head-extension
discipline over the hash-chained log of \S\ref{sec:fleetmodel}; we treat
that ordering discipline as a deployment detail and do not rely on it
for any property above.

\begin{corollary}[By-construction attack resistance]
\label{cor:attacks}
Under the model of \S\ref{sec:fleetmodel}, the protocol admits none of
the four threats of \S\ref{sec:threats}: caste laundering and
repeated-relaxation escalation are refused by
Propositions~\ref{prop:relax-containment} and~\ref{prop:laundering}
(the per-axis prefix invariant charged from $c_\star$); operator
impersonation is refused because a relax or operator-only admission
requires a signature that verifies against the operator's public key;
and cause-chain forgery is refused because the triggering observation
must carry a valid upstream signature (\S\ref{sec:causechain}). The
empirical admission rates of \S\ref{sec:eval} are consistent with these
bounds.
\end{corollary}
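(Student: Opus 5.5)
The corollary is a case analysis over the four threats of \S\ref{sec:threats}. For each threat I would fix the adversarial behaviour, identify the admission check in \S\ref{sec:protocol} that it must pass, and show that this check rejects, either by a proposition already proved or by unforgeability of Ed25519 under the fleet model of \S\ref{sec:fleetmodel}. The argument is thus a reduction: any admission of an attack record implies either a violated prefix invariant, which is impossible by Proposition~\ref{prop:relax-containment}, or a forged signature, which is excluded by assumption.

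\emph{Caste laundering.} I would model the attack as $c_\star \to c' \to c''$, a tighten followed by a relax. Proposition~\ref{prop:laundering} gives the charge $\Delta_\alpha(c_\star,c'')$, independent of $c'$, so the relax is admitted only if the direct relax $c_\star \to c''$ would be. The ``before the audit chain commits'' aspect needs one more step. I would note that the relax needs an operator countersignature over the record, and step~4 appends the record to the chain, so the relaxation cannot be unaudited.

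\emph{Repeated relaxation.} For any admitted sequence, Proposition~\ref{prop:relax-containment}($\Rightarrow$) gives $\Delta_\alpha(c_\star,c_j)\le B_\alpha$ at every prefix. The cumulative elevation is therefore bounded by the envelope, and the ratification token only changes latency.

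\emph{Operator impersonation.} Both the relax path and the operator-only path require a countersignature that every member verifies against the operator's public key from the identity manifest. An adversary without the operator key can produce a verifying signature only with negligible probability, by EUF-CMA security of Ed25519. There is one residual route, through the tighten path, which needs no operator signature. Here I would invoke Proposition~\ref{prop:tighten-quorum}: an admitted tighten record carries at least one honest endorsement, and honest members check Definition~\ref{def:classes}. A disguised relax is therefore not acknowledged.

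\emph{Cause-chain forgery.} The cause record must verify under a key that the manifest lists as an authorised sensor. A robot-signed or adversary-signed observation fails this check, again by unforgeability.

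I expect the main obstacle to be the cause-chain case. It depends on the cause-record format of \S\ref{sec:causechain}, which comes after this point, so I would state explicitly the assumption that the record embeds the sensor signature and that members verify it against the manifest. I would also restate the scoping: only provenance is guaranteed, and compromised sensors and replay are excluded. A secondary subtlety is that ``by construction'' means the claim is computational, holding except with negligible forgery probability, and I would say so.
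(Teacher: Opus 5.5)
Your proposal is correct and follows essentially the same route as the paper, whose justification is this same threat-by-threat reduction. Laundering and escalation reduce to the per-axis prefix invariant charged from $c_\star$ (Propositions~\ref{prop:relax-containment} and~\ref{prop:laundering}), and impersonation and cause-chain forgery reduce to signature verification against the operator's and the authorised sensor's keys. Your additions are sound refinements of points the paper leaves implicit: closing the mislabelled-tighten bypass via Proposition~\ref{prop:tighten-quorum} (the paper relies on verifiers recomputing the class from $(c,c')$), noting that the laundering relax is itself countersigned and appended, and stating the unforgeability assumption with its negligible-probability caveat.
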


\subsection{Cause-Chain Construction}\label{sec:causechain}

Every caste reassignment is associated with a \emph{cause-chain
record} (CCR), a tuple of signed fields that documents the
event sufficient for an external auditor to reconstruct what
happened and why.

\begin{definition}[Cause-chain record]\label{def:ccr}
A CCR for a proposed reassignment is
\[
\mathit{CCR} = \langle
  \mathit{obs},\ h_i,\ c,\ c',\ t,\ \mathit{class},\ \sigma_i,\
  \sigma_O^{?},\ \pi_F \rangle
\]
where $\mathit{obs}$ is the triggering observation,
$h_i$ is the proposing robot's identity hash from the
operator-signed identity manifest, $(c, c')$ are the source and target
castes, $t$ is the timestamp, $\mathit{class}$ is one of
\{tighten, bounded-relax, operator-only\},
$\sigma_i$ is the proposing robot's signature, and $\sigma_O^{?}$
is the operator countersignature (present iff class is
bounded-relax or operator-only). The robot signature $\sigma_i$
covers $\mathit{obs}$, $h_i$, $(c, c')$, and $t$; the class is
derived from $(c, c')$ against the signed lattice and is recomputed
by verifiers rather than signed. The stored payload comprises these
signed fields plus the recorded class; the Merkle inclusion proof
$\pi_F$ that binds a record to the audit chain's epoch root is
recomputed by an auditor from the log rather than carried inside
the record.
\end{definition}

The triggering observation $\mathit{obs}$ is structured: for
battery-low events it is a signed sensor reading; for payload
exceedance it is a signed force-torque measurement; for
operational priority shift it is a signed operator
directive. The signature on $\mathit{obs}$ is by the originating
sensor or actor, not by the proposing robot. This separation
prevents threat 4 of \S\ref{sec:threats} (cause-chain forgery
by the proposing robot): a forged observation has no valid
upstream signature and fails verification.

\paragraph{Forensic reconstruction.}
The cause-chain record supports complete after-the-fact reconstruction
of any reassignment from the operator-signed material alone: the
operator's public key, the operator-signed identity manifest, and the
audit chain.

\begin{proposition}[Forensic reconstruction]\label{prop:forensic}
Given the operator's public key, the operator-signed identity manifest
(which binds every principal's public key, the privilege lattice, and the
fleet roster of size $N$), and the operator-signed audit chain, an
external auditor can reconstruct and verify, for any reassignment event
recorded in the chain, the full CCR including the triggering observation,
the proposing robot's identity, the caste transition, the class, the
countersignature path taken, and, for a tighten, the Byzantine-quorum
certificate that endorsed it.
\end{proposition}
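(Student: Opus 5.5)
The plan is to give the auditor an explicit verification procedure and show that each component of the CCR is either read directly from signed material or recomputed deterministically from it. The argument rests on existential unforgeability of Ed25519 and collision resistance of the hash underlying the chain and the Merkle tree. The trust anchor is the operator's public key. The auditor first verifies the operator's signature on the identity manifest. This yields, authenticated, every principal's public key (robots, sensors, operator), the lattice $(P,\preceq,\mathit{actions})$, the magnitude functions $\Delta_\alpha$, the envelope $E$ with bounds $B_\alpha$, the reserved set $O$, and the roster size $N$. From $N$ the auditor computes $f=\lfloor (N-1)/3\rfloor$ and the quorum size $q=\lceil (N+f+1)/2\rceil$ without trusting anything else.

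Next the auditor authenticates the log. It recomputes the hash chain link by link from the recorded payloads. It rebuilds the Merkle tree for each epoch and checks that the root matches the operator-signed epoch root. It then recomputes the inclusion proof $\pi_F$ for the target record, consistent with Definition~\ref{def:ccr}, which says $\pi_F$ is recomputed rather than stored. By collision resistance, any record whose inclusion verifies against a signed root is exactly the record that was committed. A tampered record fails either the chain link or the root.

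Then the auditor verifies the record field by field.
\begin{enumerate}
\item It checks $\sigma_i$ over $(\mathit{obs},h_i,c,c',t)$ against the key bound to $h_i$ in the manifest. This authenticates the robot's identity, the transition, and the timestamp.
\item It checks the upstream signature on $\mathit{obs}$ against an authorized sensor or actor key in the manifest. This establishes the provenance of the trigger, not its physical truth, in line with the scope of threat~4.
\item It recomputes $\mathit{class}$ from $(c,c')$ using the priority order of Definition~\ref{def:classes} and compares it with the recorded class. The classification is mechanical and total, so this step is deterministic.
\end{enumerate}

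The countersignature path is settled by the recomputed class, with one case per class.
\begin{itemize}
\item \emph{Bounded-relax or operator-only.} The auditor verifies $\sigma_O$ against the operator key. For bounded-relax it also re-evaluates $\Delta_\alpha(c_\star,c')\le B_\alpha$ against the epoch's declared baseline, which is legitimate by Proposition~\ref{prop:relax-containment}.
\item \emph{Tighten.} The auditor reads the persisted quorum certificate and checks three things: at least $q$ valid acknowledgement signatures, each over the record's hash, from distinct roster members. Proposition~\ref{prop:tighten-quorum} then gives the honest-endorsement interpretation.
\end{itemize}

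The main obstacle is that the statement asserts recovery of the quorum certificate and the path taken, yet the CCR tuple of Definition~\ref{def:ccr} does not list the certificate. Honest verification also requires the epoch baseline $c_\star$, which is not a CCR field. I would handle this by stating explicitly that the certificate and the signed baseline declaration are persisted in the audit chain alongside the record and covered by the same hash chain and Merkle root, as the scope paragraph indicates for the persisted Byzantine-quorum certificate. Inclusion then authenticates them just as it authenticates the record. I would also make explicit that ``reconstruct'' means the recovered fields are the committed ones up to signature or hash forgery, so the proof should be phrased as a reduction: an auditor that accepts a record differing from what was committed yields either an Ed25519 forgery or a hash collision.
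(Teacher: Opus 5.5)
Your proposal is correct and follows essentially the same route as the paper's proof. Both arguments authenticate the manifest under the operator key, recover each record from the hash-chained log, and check the chain link and Merkle inclusion against the operator-signed epoch root. Both then verify $\sigma_i$, $\sigma_O^{?}$ and the upstream signature on $\mathit{obs}$ with the manifest's keys, and for a tighten check the persisted certificate of at least $q$ distinct roster signatures. Your explicit reduction to Ed25519 unforgeability and hash collision resistance, together with the class recomputation, is a sound refinement rather than a different argument. The envelope re-check against $c_\star$ goes beyond what the statement requires, which is why you needed the extra assumption that the baseline declaration is persisted.
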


\begin{proof}
The auditor first verifies the operator's signature over the identity
manifest with the operator's public key, and thereafter trusts only the
manifest's key bindings, lattice, and roster. The audit chain is an
append-only, hash-chained log that stores each admitted CCR's \emph{full
serialized payload} (\S\ref{sec:fleetmodel}), so the record is recovered
from the log itself, independent of which fleet members are still online.
The auditor recomputes the entry hash and confirms the hash-chain link to
detect tampering, verifies the Merkle inclusion proof $\pi_F$ against the
operator-signed epoch root, and verifies the embedded signatures
$\sigma_i$, $\sigma_O^{?}$ (when present), and the upstream signature on
$\mathit{obs}$, using only the manifest's public keys. For a tighten
record it additionally verifies the persisted quorum certificate: at least
$q = \lceil (N{+}f{+}1)/2 \rceil$ distinct signatures over the record core
from members of the manifest's roster, so a tighten cannot be attributed
to a quorum that never endorsed it. If any check fails the record is
rejected as forged; if all pass, the auditor has a verifiable account of
who did what and why. The reference implementation (\S\ref{sec:eval})
performs exactly this reconstruction and verification from the operator's
public key and the signed manifest.
\end{proof}

\subsection{Distributed Audit and Fork Exclusion}\label{sec:replication}

The forensic account above treats the audit chain as one logical log. In a
deployed fleet the log is replicated: each member keeps its own copy, and
the members must agree on a single ordered history despite up to $f$
Byzantine members. We give the replicated layer explicitly. Each member
holds a local, hash-chained replica. A record is proposed for the next
sequence \emph{position}; every honest member that accepts the record signs
a \emph{position-attestation} $\langle p,\ H(\mathit{rec}),\ \sigma_m
\rangle$ binding the position $p$ to the record hash. An honest member signs
\emph{at most one} hash per position: it records the hash it signed at each
position and refuses any request to sign a different hash there, so a
malicious proposer cannot induce an honest member to equivocate. This is the
one-signature-per-position invariant the agreement proof relies on, and the
implementation enforces it for a running node rather than assuming it
(\S\ref{sec:eval}). The invariant is maintained in volatile per-node state, so
the agreement guarantee is stated for the lifetime of a running node; making it
survive a crash and restart requires durable signed-position storage, which we
treat as out of scope (\S\ref{sec:discussion}). A
member commits the record at position $p$ once it holds attestations from a
quorum $q = \lceil (N{+}f{+}1)/2 \rceil$ of distinct members for the
\emph{same} hash, and it checks that the stored payload hashes to that
certified hash, so a certificate for one record cannot admit another. An
equivocating (faulty) member that signs two different hashes at one position
emits two conflicting attestations, each individually valid under its key.
This layer is a \emph{replication} primitive: it sequences and replicates the
cause-chain records that have already passed the governed-admission checks of
\S\ref{sec:protocol} (the robot and sensor signatures, the envelope, and, for a
tighten, the admission quorum). Its guarantee is agreement and fork exclusion
over those admitted records, not a second admission gate; a deployment places
it behind the admission checks rather than exposing the raw attest interface to
arbitrary clients.

\begin{proposition}[Agreement and fork exclusion]\label{prop:fork-exclusion}
Under reliable broadcast, at most $f$ Byzantine members, and honest members
that retain their signed-position state for the duration (no crash-restart that
loses it), (i) at most one record commits at each position, so every honest
replica commits an identical ordered log (a fork-free total order); and (ii) an
equivocating member is detected by every honest member that receives both of
its conflicting attestations, and neither forked record commits.
\end{proposition}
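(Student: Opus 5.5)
The argument is a counting argument on quorum intersection, combined with the one-signature-per-position invariant of honest members. For part (i), fix a position $p$ and suppose, for contradiction, that two distinct records with hashes $h \neq h'$ both commit at $p$, possibly at different honest members. By the commit rule, each commit is backed by a certificate of $q = \lceil (N{+}f{+}1)/2 \rceil$ attestations from distinct roster members on the respective hash. Let $Q$ and $Q'$ be the two signer sets. As in the proof of Proposition~\ref{prop:tighten-quorum}, $2q \ge N+f+1$, so
\[
|Q \cap Q'| \ge 2q - N \ge f+1 .
\]
Hence $Q \cap Q'$ contains at least one honest member. That member would have signed both $\langle p, h\rangle$ and $\langle p, h'\rangle$, which contradicts the one-signature-per-position invariant. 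The invariant holds by the hypothesis that honest members retain their signed-position state, so there is no crash-restart loss.

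To make this step sound, I would first check the following:
\begin{itemize}
\item Signatures are unforgeable (Ed25519 under honest keys in the manifest), so attestations attributed to honest members were really produced by them.
\item Certificates count only \emph{distinct} roster members.
\item The committing member checks that the stored payload hashes to $h$, so a single certified hash identifies a unique record; this rules out distinct records under one certificate, up to collision resistance of $H$.
\end{itemize}

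To lift ``at most one record per position'' to ``identical ordered logs'', I would argue position by position. Reliable broadcast guarantees that every certificate delivered to one honest member is eventually delivered to all. Each honest replica therefore commits, at every position, exactly the unique certifiable record, and each honest replica's log is the sequence of these records. The hash-chain link then makes the order consistent, because each entry binds its predecessor. The statement is about agreement on committed prefixes, so I would phrase it as: for every $p$, any two honest replicas that have committed position $p$ hold the same record there.

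For part (ii), detection is direct. An equivocating member $m$ emits valid attestations $\langle p, h, \sigma_m\rangle$ and $\langle p, h', \sigma'_m\rangle$ with $h \ne h'$. Any honest member receiving both verifies both signatures against $m$'s manifest key. This pair is a transferable proof of misbehaviour, since honest members never produce such a pair. That neither forked record commits follows from part (i) only if a forked record cannot reach $q$ on its own. This is the step I expect to be the main obstacle, because part (i) alone shows only that at most one \emph{commits}, not that \emph{neither} does.

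My plan for the obstacle is to interpret ``fork'' as the situation in which the equivocator pushes two competing hashes to disjoint honest subsets. I would then show by counting that if honest attestations split between $h$ and $h'$, neither side can reach $q$. Each side has at most its own honest signers plus the $f$ Byzantine members. If both sides reached $q$, intersection would again force an honest double-signer. So at most one side could ever certify. When honest members are genuinely split, the remaining question is whether one side could still reach $q$ with Byzantine help. I would try bounding the number of honest signers on each side using the proposer's split together with reliable broadcast, so that honest members see the conflict before attesting. If that cannot be forced in general, I would fall back to the weaker, still correct reading: no \emph{conflicting pair} commits, the equivocation is detected, and any record that does commit is unique and agreed upon.
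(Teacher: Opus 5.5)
Your part (i) and the detection half of (ii) follow the paper's argument. Two quorums meet in at least $2q-N\ge f+1$ members, and hence in an honest double-signer. Agreement follows because every honest member applies the same commit rule to the same reliably broadcast attestations. Two valid attestations for $H_a\ne H_b$ at one position are a non-repudiable proof of equivocation. The side conditions you add are implicit in the paper and worth keeping: unforgeability, distinct roster signers, the payload-hash check, and prefix-wise agreement.

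The obstacle you flag in (ii) is real, and the strengthening you sketch for it would fail. The paper's proof does not get past it either. The paper asserts that neither $H_a$ nor $H_b$ can gather $q$ \emph{honest} attestations and then invokes (i). But (i) only gives that at most one of the two commits, and Byzantine attestations also count toward the commit rule.

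Under the stated hypotheses the strong reading (neither record commits) is false. Take $N=10$, $f=3$, $q=7$.
\begin{itemize}
\item If an honest proposer sends $H_a$ to everyone, the seven honest attestations alone commit $H_a$. This holds even while a Byzantine member equivocates by also attesting $H_b$.
\item If a malicious proposer splits the seven honest members into groups of $4$ and $3$, neither side has $q$ honest attestations. Yet the larger side reaches $4+3=q$ with the three Byzantine attestations and commits.
\end{itemize}

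Routing proposals through reliable broadcast cannot rescue the strong reading. Its consistency property only makes the honest members unanimous, and that is exactly the case in which the hash commits. The commit rule also has no abort-on-detection clause.

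So drop the bounding attempt and commit to your fallback: the equivocation is detected, and no two conflicting records commit at one position. That is exactly what the paper's appeal to (i) establishes. It also matches how the paper's malicious-proposer experiment uses the term: every honest process signs the first hash, and only the refused second hash is called the fork.
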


\begin{proof}
(i) Suppose two records with distinct hashes both committed at position $p$.
Each commit required $q$ distinct attestations for its hash. Two sets of
size $q = \lceil (N{+}f{+}1)/2 \rceil$ intersect in at least $N - 2(N-q) =
2q - N \ge f + 1$ members, hence in at least one honest member. But an honest
member attests at most one hash per position, so it cannot be in both sets:
contradiction. Thus at most one record commits per position, and since every
honest member applies the same commit rule to the same reliably-broadcast
attestations, all honest replicas commit the same sequence. (ii) An
equivocator's two attestations $\langle p, H_a, \sigma\rangle$ and
$\langle p, H_b, \sigma\rangle$ with $H_a \ne H_b$ both verify under the
member's registered key, so any honest member holding both has a
non-repudiable proof of equivocation. Neither $H_a$ nor $H_b$ can gather $q$
honest attestations (each honest member signs only its own single view), so
by (i) no forked record commits.
\end{proof}

Scope: this models reliable broadcast, one decree per position, and a
\emph{static} membership fixed for the epoch. Dynamic node join and leave
(reconfiguration), view changes, leader election, and liveness under
asynchrony are standard consensus concerns we do not re-derive: they are
abstracted by the reliable-broadcast and static-membership assumptions, and a
production deployment would layer a reconfiguration/view-change protocol
(Paxos/Raft/PBFT-style epochs) beneath the safety property established here.
The claim here is that safety property (agreement and fork exclusion) that the
governed audit chain requires, evaluated directly in \S\ref{sec:rq5}.

\section{Experiments}\label{sec:eval}

We evaluate the protocol on five properties: tightening latency (RQ1),
bounded-relax gating behaviour (RQ2), mechanical prevention of the four
threat patterns of \S\ref{sec:threats} (RQ3), quorum admission and
operator-key-only offline audit (RQ4), and the distributed replicated-log
layer with fork exclusion (RQ5). Evaluation is sim-primary supplemented by a
robot-class timing anchor at $N{=}10$.

\subsection{Setup}

The protocol is realised as a genuine reference implementation
(released with the paper): each principal holds a real Ed25519
key~\citep{bernstein2012ed25519}, signatures are produced and verified
with a standard cryptographic library, and the audit chain is a real
SHA-256 hash-chained log with per-epoch Merkle roots and inclusion
proofs~\citep{merkle1988signature}. Admission decisions are made by
these cryptographic checks and by the security envelope, so the attack
results below are produced by the mechanism rather than stipulated: a
forged operator countersignature fails a real elliptic-curve
verification, and a laundering sequence is charged against the real
envelope. Latency is estimated by a discrete-event network model layered
on the implementation: one-way message latency follows a log-normal
(median $1.8$~ms, heavy tail) with a first-order contention term that
grows with $N$, and the operator round trip is a log-normal with median
$85$~ms modelling a separate workstation. The per-signature verification
cost in the model is not guessed: it is \emph{measured} on the
evaluation host (Ed25519 verify $\approx 0.09$~ms, sign
$\approx 0.03$~ms) and used directly.

The evaluation covers fleets of $N \in \{10, 50, 100\}$ across three
scenarios: \emph{hospital logistics}, \emph{factory cell}, and
\emph{defence patrol}, each with three castes and its own reassignment
triggers (battery and human-presence; payload and queue depth; mission
state). A reassignment is exercised at up to $0.5$~Hz per robot, well
above the $0.1$~Hz floor of the target deployments, so it must complete
far inside the $2$~s inter-event budget. Every latency cell aggregates
$6000$ trials; the attack harness runs $500$ trials per pattern per
scenario. The whole evaluation is run under five seeds
($20260611$--$20260615$): every reported latency median moves by less
than $0.05$~ms across seeds, and the attack trials are pooled
($7500$ per pattern) with Wilson $95\%$ intervals.

A \emph{robot-class timing} operating point additionally models a
ten-robot, TurtleBot4-class deployment over a wireless-inspired channel
(heavier latency variance) with slower robot-class signature costs
(sign/verify $0.18/0.32$~ms) and the operator on a separate workstation;
its median ($12.5$~ms) sits inside the simulator's $N{=}50$ to $N{=}100$
envelope. All numbers are from the reference implementation and its
network model; a physical-robot campaign is future work
(\S\ref{sec:discussion}), and the contention term is first-order and
does not capture wireless broadcast-storm effects at large $N$.

\subsection{Tightening Latency (RQ1)}

We measure end-to-end latency from the originating sensor event to the
proposing robot's adoption of the target caste, for auto-tighten
transitions: the robot signs the request, waits for a quorum of
$\lceil (N{+}f{+}1)/2 \rceil = q$ acknowledgements, and verifies those
$q$ acknowledgement signatures before adopting. Median tightening latency
stays in the single-digit to low-double-digit millisecond range and grows
with $N$ as the quorum size $q$, the shared-medium contention, and the
proposer's $O(q)$ acknowledgement-verification cost all grow
(Table~\ref{tab:eval-tighten}, Figure~\ref{fig:results}(a)). It is
essentially scenario-independent, as expected: the scenarios differ in
what triggers a reassignment, not in the cost of admitting one, and the
medians move by under $0.05$~ms across the five seeds. The
robot-class timing point at $N{=}10$ has a median of $12.5$~ms,
between the simulator's $N{=}50$ ($11.3$~ms) and $N{=}100$ ($17.7$~ms)
points, with a heavier $p99$ tail ($17.3$~ms) from the slower robot-class
crypto and heavier wireless variance.

\begin{table}[!ht]
\centering
\small
\caption{Median tightening latency (ms) by fleet size; $6000$ trials per
cell. Values are for the primary seed $20260611$; every median moves by
$<0.05$~ms across the five seeds ($20260611$--$20260615$). The latency
includes the proposer's $O(q)$ verification of the quorum
acknowledgements. Latency is scenario-independent, so a single governed
row is reported; the governed $p99$ is $8.1/12.2/18.3$~ms at
$N{=}10/50/100$. The robot-class timing point models a ten-robot
wireless deployment ($p99 = 17.3$~ms).}
\label{tab:eval-tighten}
\begin{tabularx}{0.92\linewidth}{@{}lYYY@{}}
\toprule
                                   & $N{=}10$ & $N{=}50$ & $N{=}100$ \\
\midrule
Governed protocol                  & 6.0  & 11.3 & 17.7 \\
Ungoverned (no audit gate)         & 2.1  & 3.4  & 4.8 \\
Coordinator reallocation           & 4.5  & 6.9  & 9.9 \\
Robot-class timing ($N{=}10$)       & 12.5 & ---  & --- \\
\bottomrule
\end{tabularx}
\end{table}

Table~\ref{tab:eval-tighten} also reports two gate-removal ablations: an
ungoverned reassignment (one broadcast, immediate adoption) and a
coordinator-mediated reallocation without audit, in the lineage
of~\citep{nair2003roleallocation}. Both are faster because they skip the
audit quorum and its acknowledgement verification; the governance gate
adds $3.8/7.9/12.9$~ms over the ungoverned ablation at $N{=}10/50/100$,
two orders of magnitude below the $2$~s inter-event budget.

\subsection{Bounded-Relax Gating (RQ2)}

We drive a stream of bounded-relax requests through the real protocol,
each consuming a random per-axis magnitude from a pre-issued ratification
token that the operator reissues once it is exhausted. Whether a request
short-circuits or goes live is decided by the actual token state inside
\texttt{propose()}, not by a synthetic side calculation. Across the three
scenarios and five seeds, $70$--$72\%$ of requests complete via the
short-circuit path (a local token check, no operator round trip) while
the remaining $28$--$30\%$ take the live path and incur the operator
round trip (modelled at a median of $85$~ms). The protocol thus reserves
live operator attention for the token-exhausting minority and handles the
majority of relaxations without an interactive signature, while the
security envelope holds on every admitted relaxation regardless of the
token's state (Proposition~\ref{prop:relax-containment}). The split
tracks the token headroom as expected rather than being a fixed
artefact: under headrooms of $0.5$, $1.0$, and $2.0$ on the same
workload, the short-circuit fraction is $40\%$, $71\%$, and $87\%$. The
measured quantity here is the short-circuit/live split; the $85$~ms
live-path figure is the model's operator-round-trip parameter, not a
measured bounded-relax latency. We are explicit about what the artifact
does and does not exercise: both paths carry the same operator
countersignature over the record, and the ratification token models the
\emph{accounting} of a pre-issued authorization (whether the request is
covered without a live operator round trip). The artifact does not
implement a distinct cryptographic fast-path object separate from a live
signature; the $85$~ms is applied externally to the token-live requests
rather than emerging from two different signing primitives.

\subsection{Attack Pattern Resistance (RQ3)}

We run each of the four threats of \S\ref{sec:threats} as a randomized
adversarial test, not a single scripted case: each trial builds a fresh
world with fresh keys and randomized observation values and crafts the
genuinely-forged records the attack needs (a forged operator
countersignature signed with the adversary's key, an observation signed
by a non-sensor, a tighten-then-relax sequence against the envelope).
Refusal is decided by real Ed25519 verification and the envelope check,
never by a branch keyed on the attack name. The same crafted records are
run through three \emph{ablations} of the governed protocol, not
competing secure-MAS systems: an \emph{ungoverned} variant with no gate;
a \emph{reallocation} variant that authenticates the requester but does
not govern privilege; and an \emph{authorization-only} variant that
requires a valid operator countersignature for a relaxation but omits the
per-axis envelope and the observation check. We report, pooled over the
five seeds ($7500$ trials per pattern), the fraction \emph{admitted}
(target $0$) with its Wilson $95\%$ upper bound and the measured
per-attempt verification cost (Table~\ref{tab:eval-attacks},
Figure~\ref{fig:results}(b)). The governed protocol admits zero of $7500$
attempts for every pattern (Wilson upper bound $0.0005$). The measured cost of
the governed protocol's decisive \texttt{propose()} decision (its signature and
envelope checks, timed in isolation from attack construction and the comparator
baselines) is $0.19$--$0.30$~ms per attempt across the four patterns
(Table~\ref{tab:eval-attacks}). The comparison is informative rather than trivial:
the ungoverned and reallocation baselines admit every attack, but the
authorization-only baseline admits caste laundering, repeated-relax
escalation, and cause-chain forgery while \emph{refusing} operator
impersonation. This isolates which gate each attack requires:
impersonation is stopped by the operator-signature check alone, whereas
laundering and escalation need the per-axis envelope and forgery needs
the upstream-observation check. Because the governed protocol refuses all
four by construction (Corollary~\ref{cor:attacks}), its column is a
conformance check that the real implementation realises the proven
guarantee; the baseline gradient is the empirical content. The gate-free
and authorization-only baselines are ablations that isolate the effect of
removing a gate, not competitive secure-MAS systems.

\begin{table}[!ht]
\centering
\small
\caption{Attack admission rate (lower better; $7500$ pooled trials per
pattern over five seeds; Wilson $95\%$ upper bound in brackets). ``Gov.'' is
the governed protocol; ``Auth.'' is the authorization-only comparator (operator
countersignature required, no envelope or observation check);
``Gate-free'' is the ungoverned and reallocation baselines, which both
admit every attack. ``Cost'' is the measured wall-clock of the governed
protocol's decisive \texttt{propose()} decision for one attempt, excluding
attack construction and the comparator baselines.}
\label{tab:eval-attacks}
\begin{tabularx}{\linewidth}{@{}lYYYY@{}}
\toprule
\textbf{Pattern}          & \textbf{Gov.} & \textbf{Auth.} & \textbf{Gate-free} & \textbf{Cost (ms)} \\
\midrule
Caste laundering          & 0 ($<$0.001) & 1.00 & 1.00 & 0.20 \\
Repeated-relax escalation & 0 ($<$0.001) & 1.00 & 1.00 & 0.19 \\
Operator impersonation    & 0 ($<$0.001) & 0.00 & 1.00 & 0.30 \\
Cause-chain forgery       & 0 ($<$0.001) & 1.00 & 1.00 & 0.20 \\
\bottomrule
\end{tabularx}
\end{table}

\begin{figure}[!t]
\centering
\includegraphics[width=\linewidth]{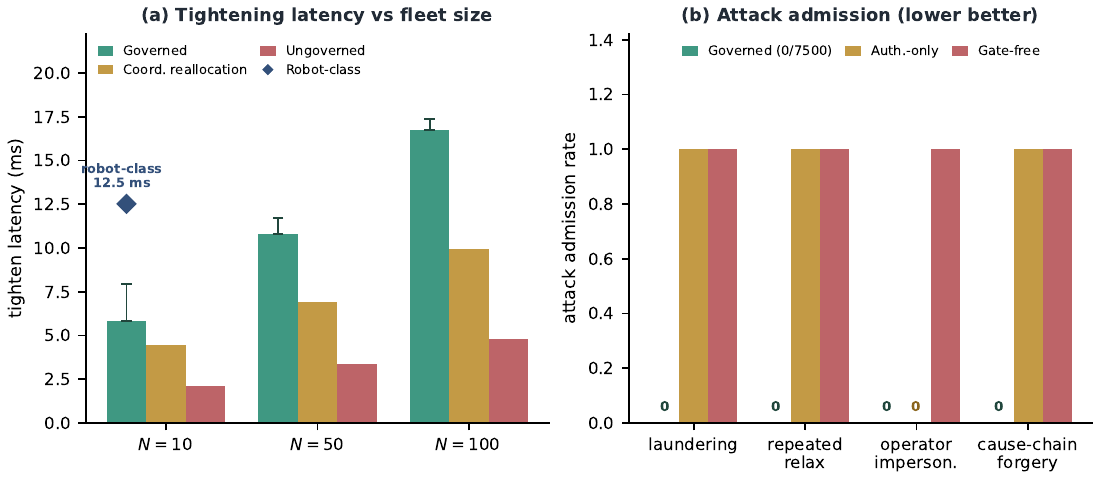}
\caption{(a) Median tightening latency versus fleet size for the governed
protocol and the two gate-removal ablations (the governed bar carries a whisker
to its $p99$), with the robot-class timing $N{=}10$ point marked; the
governance gate costs single-digit to low-double-digit milliseconds over
an ungoverned reassignment, growing with fleet size. (b) Attack-admission rate per pattern: the governed protocol
admits $0/7500$ (the Wilson $95\%$ whisker is below the axis resolution);
the authorization-only baseline admits all but operator impersonation; the
gate-removal ablations admit every attack.}
\label{fig:results}
\end{figure}

\paragraph{Randomized adversary (attack discovery).}\label{sec:rq3b} The four
patterns above are named threats. To probe the admission surface beyond them, we also run a
randomized adversary that, on each of $2000$ trials, draws a random mutation of
an otherwise-valid record (a forged robot signature, an observation signed by a
non-sensor, a corrupted identity hash, or a relaxation beyond the envelope with
a forged operator countersignature) and one legitimate tighten as a control.
The governed protocol admitted $0$ of the mutations it was not meant to admit
and every legitimate control, so no random malformation slipped through
(\texttt{fuzz\_adversary} in \texttt{results.json}). This is coverage over
random inputs rather than a fixed script, and it found no admission the proofs
do not already exclude.

\subsection{Quorum and Offline Audit (RQ4)}

Two further properties are exercised directly against the implementation.
First, tighten admission is subject to the Byzantine quorum of
\S\ref{sec:fleetmodel}, using real signed acknowledgements: each fleet
member that accepts the record returns an Ed25519 signature over the
record core, and the proposer admits only when at least $q$ valid
distinct member signatures verify. Over $1000$ trials per fleet size, an
honest tighten always reaches the quorum ($1.00$) while a forged tighten
(a bad robot signature) backed by all $f$ Byzantine members never does
($0.00$ at $N{=}10, 50, 100$), so no rogue coalition of size $\le f$
produces an admitted tighten (Proposition~\ref{prop:tighten-quorum}).
Second, the
offline auditor of Proposition~\ref{prop:forensic} reconstructs and
verifies the log from the operator's public key alone: given the
operator-signed identity manifest (which binds every principal's public
key, the privilege lattice, and the fleet roster of size $N$) and the
operator-signed epoch root, it recovers every record from the log,
verifies the hash-chain, the Merkle inclusion proofs, and every embedded
signature, and rejects a log in which a single stored payload byte has been
altered. Each tighten record also carries a persisted
\emph{quorum certificate}, the set of distinct member acknowledgement
signatures gathered at admission, so the auditor re-checks offline that the
tighten was endorsed by at least $q$ registered members; a tighten record
whose certificate is stripped or forged is rejected offline, closing the
gap between the admission-time quorum check and the forensic claim. Given
the operator-declared envelope, the auditor also recomputes each record's
class and rejects a bounded-relax record whose target exceeds the per-axis
bound of the reset baseline, so an out-of-envelope but validly
countersigned record is caught offline. It further confirms that each
record's identity hash $h_i$ is the one bound to the signer's registered
key, so a record cannot be attributed to a robot that did not sign it; a
registered adversary that signs a valid tighten while embedding a victim's
identity hash is rejected both at admission and offline.

\subsection{Distributed Audit and Fork Exclusion (RQ5)}\label{sec:rq5}

The experiments above run the audit chain as one logical log. To evaluate
the genuinely distributed layer, we replace that single log with $N$
per-member replicas and a quorum-committed replication protocol
(\S\ref{sec:replication}): a record commits at a sequence position only when
a quorum $q$ of members sign a \emph{position-attestation} binding
(position, record hash). Because any two quorums share an honest member, and
an honest member attests at most one hash per position, two conflicting
records cannot both reach quorum at one position; every honest replica
therefore commits the same ordered sequence. A Byzantine member that signs
two different hashes at one position produces two conflicting, individually
valid attestations: a cryptographic proof of equivocation that any honest
member holding both extracts.

We run this layer directly (\texttt{gcr/replication.py}) at
$N{=}10/50/100$. Table~\ref{tab:eval-distributed} reports, for each fleet
size, that all honest replicas converge on one total order (agreement),
the distributed-commit latency (two broadcast rounds plus the proposer's
$q$ attestation verifications), and the outcome of a Byzantine
equivocation. Across $50$ equivocation trials per fleet size, every honest
replica detects the fork ($1.00$) and no forked record ever commits
($0.00$): quorum intersection makes a conflicting commit impossible, and
the equivocation proof is recovered on every honest node. The
distributed-commit latency stays in the single-digit to low-double-digit
millisecond range, roughly the cost of the single-round tighten plus a
second broadcast round, so replication does not change the operating regime.

\begin{table}[!ht]
\centering
\small
\caption{Distributed audit over $N$ replicated per-member logs. ``Agree''
is $1$ when every honest replica commits an identical ordered log; ``Commit''
is the median distributed-commit latency (with $p99$); ``Detect'' is the
fraction of Byzantine-equivocation trials detected on every honest replica;
``Fork'' is the fraction of trials in which any honest replica committed a
forked record ($f$ is the tolerated Byzantine bound, $q$ the quorum).}
\label{tab:eval-distributed}
\begin{tabularx}{\linewidth}{@{}l YY Y Y Y@{}}
\toprule
$N$ ($f$, $q$) & Agree & Commit (ms) & $p99$ & Detect & Fork \\
\midrule
$10$ ($3$, $7$)   & $1.00$ & $5.5$  & $10.1$ & $1.00$ & $0.00$ \\
$50$ ($16$, $34$) & $1.00$ & $10.7$ & $15.8$ & $1.00$ & $0.00$ \\
$100$ ($33$, $67$)& $1.00$ & $17.0$ & $21.7$ & $1.00$ & $0.00$ \\
\bottomrule
\end{tabularx}
\end{table}

\paragraph{Real multi-process deployment.}\label{sec:rq5-real}
The table above uses the discrete-event network model to isolate the
protocol's own message and verification cost. To confirm the safety
properties on a genuinely distributed system rather than a model, we also run
the layer as a \emph{real deployment}: each of the $N$ members is a separate
operating-system process with its own TCP socket, its own Ed25519 key, and
its own replicated log (\texttt{netnode.py}), and a driver runs the real
two-round quorum-commit protocol over real sockets, with one member launched
as a real Byzantine equivocator process (\texttt{net\_experiment.py}). We ran
this at $N{=}10/50/100$ (up to $100$ concurrent processes) on a
$128$-core Linux host. Table~\ref{tab:eval-real} reports the outcome: at every
fleet size all honest processes agree on one ordered log, every honest process
detects the Byzantine equivocation, and no forked record commits. We also run
the harder \emph{malicious-proposer} attack with zero Byzantine members: the
proposer asks every honest process to sign two conflicting hashes at one
position. Every honest process (all $9/9$, $49/49$, $99/99$) signs the first
and refuses the second over the real socket, so the fork never gathers a quorum
and no forked record commits: the one-signature-per-position invariant holds on
the real deployment, not only in the proof. The commit
latency here is measured end-to-end over real TCP and includes interpreter,
loopback, and thread-pool overhead of an unoptimized single-host reference
(hence its growth with $N$ as the driver fans out to more sockets); it is a
real-system measurement, not the protocol's modelled cost. The point of this
run is that the agreement, fork-exclusion, and detection guarantees hold
unchanged on real processes over a real network, not that the reference driver
is latency-optimal.

\begin{table}[!ht]
\centering
\small
\caption{Real distributed deployment: $N$ fleet members as real OS processes
communicating over real TCP sockets, with a real Byzantine equivocator
process. ``Agree'', ``Detect'', and ``Fork'' are as in
Table~\ref{tab:eval-distributed}; ``Commit'' is the median (and $p99$)
end-to-end commit latency measured over real sockets on a $128$-core host
(an unoptimized single-host reference, so it grows with the driver's fan-out).}
\label{tab:eval-real}
\begin{tabularx}{\linewidth}{@{}l Y Y Y Y Y@{}}
\toprule
$N$ ($f$, $q$) & Agree & Commit (ms) & $p99$ & Detect & Fork \\
\midrule
$10$ ($3$, $7$)   & $1.00$ & $4.1$   & $6.4$   & $1.00$ & $0.00$ \\
$50$ ($16$, $34$) & $1.00$ & $30.1$  & $39.2$  & $1.00$ & $0.00$ \\
$100$ ($33$, $67$)& $1.00$ & $110.4$ & $193.9$ & $1.00$ & $0.00$ \\
\bottomrule
\end{tabularx}
\end{table}

\paragraph{Summary.} The evaluation supports the five claims of
\S\ref{sec:intro}: tightening completes in a single round trip without
operator involvement at single-digit to low-double-digit millisecond
latency; the audit gate adds single-digit to low-double-digit milliseconds
over an ungoverned reassignment; bounded relaxation completes via the short-circuit path
for the majority of requests while reserving live operator attention for
the token-exhausting minority; the governed protocol refuses all four
enumerated attacks by construction while a partially-governed baseline
isolates the gate each attack requires; the offline auditor and the
Byzantine quorum are exercised directly on the implementation; and the
distributed replicated-log layer commits a fork-free total order and detects
every Byzantine equivocation.

\subsection{Claim-to-Artifact Traceability}\label{sec:traceability}

Each formal claim is realised by a named module of the reference
implementation and checked by a named test; Table~\ref{tab:traceability}
maps them so a reviewer can locate the code and rerun the exact check
(\texttt{pytest -q} in the artifact runs all of them). The full reference
implementation, tests, and evaluation data are publicly available at
\url{https://github.com/s20sc/governed-caste-reassignment}.

\begin{table}[!ht]
\centering
\small
\caption{Mapping from each formal claim to the implementing module and the
test that exercises it. Module paths are under \texttt{gcr/}; the test
names below are the \texttt{test\_} functions in \texttt{tests/test\_gcr.py}
(the \texttt{test\_} prefix is omitted for space).}
\label{tab:traceability}
\newcommand{\us}{\_\allowbreak}
\begin{tabularx}{\linewidth}{@{}ll>{\raggedright\arraybackslash}X@{}}
\toprule
\textbf{Claim} & \textbf{Module} & \textbf{Test} \\
\midrule
Prop.~\ref{prop:tighten-safety} (tighten safety)    & \texttt{classes.py} & \texttt{honest\us transitions\us admitted} \\
Prop.~\ref{prop:relax-containment} (relax containment) & \texttt{budgets.py} & \texttt{offline\us verifier\us\dots} (out-of-envelope) \\
Prop.~\ref{prop:laundering} (laundering prevention) & \texttt{protocol.py}, \texttt{attacks.py} & \texttt{governed\us refuses\us all\us attacks\us\dots} \\
Prop.~\ref{prop:tighten-quorum} (tighten quorum)    & \texttt{quorum.py} & \texttt{quorum\us rejects\us forged\us tighten}; \texttt{propose\us tighten\us is\us quorum\us gated} \\
Cor.~\ref{cor:attacks} (all four refused)           & \texttt{protocol.py}, \texttt{attacks.py} & \texttt{governed\us refuses\us all\us attacks\us\dots}; \texttt{authorization\us only\us refuses\us only\us impersonation} \\
Prop.~\ref{prop:forensic} (forensic reconstruction) & \texttt{verify.py}, \texttt{manifest.py} & \texttt{offline\us verifier\us reconstructs\us and\us detects\us tampering}; \texttt{identity\us hash\us forgery\us rejected}; \texttt{manifest\us offline\us audit\us and\us quorum\us certificate} \\
Prop.~\ref{prop:fork-exclusion} (agreement, fork exclusion) & \texttt{replication.py}, \texttt{netnode.py} & \texttt{malicious\us proposer\us cannot\us fork\us honest\us nodes}; \texttt{f\us colluding\us byzantine\us cannot\us fork}; \texttt{byzantine\us equivocation\us detected\us and\us excluded}; \texttt{real\us multiprocess\us deployment\us agrees\us and\us excludes\us forks} \\
\bottomrule
\end{tabularx}
\end{table}

\section{Discussion and Limitations}\label{sec:discussion}

\paragraph{Operator Reachability.}
The bounded-relax path admits a short-circuit form using a
pre-issued ratification token bound to the operator's envelope.
When the operator network is partitioned or the operator is
otherwise unreachable, the short-circuit form continues to
admit requests within the most recent envelope, while requests
requiring a live countersignature stall. A time-bounded
waiting policy admits requests within a configured deadline
or auto-denies them; the safe fallback is to admit a
tightening request that lowers the robot to a known-safe
caste. The detailed policy is operator-configured and is part
of the operator's deployment-time commitment.

\paragraph{Operator Key Compromise.}
If the operator's signing key is compromised, the adversary can
forge countersignatures and admit arbitrary relaxations. The
construction relies on standard operator-key hygiene and does not
add an internal defence; standard mitigations (hardware-bound keys,
rotation, revocation) apply.

\paragraph{Pre-Committed Envelope Specification.}
The bounded-relax path requires the operator to specify, at
deployment time, the axes along which relaxation is allowed
and the bound along each axis. Specifying these envelopes is
a deployment-time engineering task; specifying them badly
either admits unintended relaxations or blocks safe ones. We
treat envelope specification as a configuration concern and
leave its formalisation to future work.

\paragraph{Evaluation Scope.}
The evaluation runs the real protocol: real Ed25519 signatures and a
real hash-chained, Merkle-committed audit log, with the full cryptographic
substrate (per-principal signing keys, identity binding, the hash-chained
Merkle log, offline verification from the operator-signed manifest, and
quorum-signature verification). The latency and attack experiments
(RQ1--RQ4) run the audit chain as a single logical log; the fully
replicated $N$-member layer, with quorum-committed total order and fork
exclusion, is implemented and evaluated directly in RQ5
(\S\ref{sec:rq5}), both in a property simulation and as a real deployment of
$N$ separate processes over TCP sockets with a real Byzantine equivocator, so
cross-replica agreement and equivocation are measured on a real distributed
system rather than assumed. Three things remain out of the deployment: the real
run is a single-host reference (all processes on one machine over loopback,
not a multi-host wide-area network); the one-signature-per-position state is
held in volatile per-node memory, so the agreement guarantee is for the
lifetime of a running node and a crash-restart that lost the state would need
durable signed-position storage (a fork under crash-recovery is otherwise
possible and is future work); and the replication layer assumes
reliable broadcast and static membership, leaving asynchronous-consensus
concerns (network asynchrony, view changes, and liveness under asynchrony) to
a standard consensus substrate beneath the safety property we establish. The
modelled-network experiments (RQ1--RQ4) also use a discrete-event latency
model whose per-signature verification cost is measured on the host and whose
contention term is first-order (it does not capture wireless broadcast-storm
effects at large $N$). The evaluation includes
a parameterized robot-class timing point at ten robots but no measurements
from physical robots. The governed attack results are zero-by-construction
(Corollary~\ref{cor:attacks}) confirmed by running the real implementation,
so the governed column verifies that the implementation realises the proven
guarantee rather than discovering it empirically; the measured baselines,
verification costs, randomized-fuzz outcomes, and distributed-equivocation
outcomes are the empirical content. The admission-surface security claims
are scoped to the four enumerated threats plus the randomized fuzz coverage;
remaining threats (signature replay, stale observations, reset abuse, and a
compromised authorized sensor) are future work. The artifact ships boundary
tests (\texttt{tests/test\_scope\_boundaries.py}) that pin the
implementation's current behaviour at these out-of-scope points (for
example, that a replayed signed record is still admitted and a
stale-timestamp observation is not rejected), so the scope statement is
checkable rather than only asserted. A physical-robot campaign on a
TurtleBot4-class fleet and a richer contention model are the natural next
steps.

\paragraph{Physical Compromise.}
The protocol governs the \emph{cooperative} surface of a reassignment: it
constrains which caste bindings peers and the operator will endorse and
audit, so a compromised robot cannot obtain an admitted, audited
privilege elevation without the operator. It cannot, however, prevent a
physically compromised robot from actuating its own motors or sensors
unilaterally outside the protocol; cryptographic authorisation bounds
what the fleet will \emph{recognise and record}, not what a rogue body can
physically attempt. Detecting and containing such physical deviation is
orthogonal and out of scope.

\paragraph{Out-of-Scope.}
We do not address cross-organisation federations with multiple
mutually distrusting operators (deferred to the federation
primitive's future work~\citep{aeros-p16}); physical capture
attacks against individual robots; or real-time control
safety properties orthogonal to caste-change authorisation.

\section{Conclusion}\label{sec:conclusion}

A regulated embodied deployment cannot treat caste reassignment
as an internal allocation decision: every privilege elevation
must be attributable to a human signature and verifiable from
the audit chain. The protocol of this paper partitions caste
changes into three classes (auto-tighten, bounded-relax,
operator-only) admitted through three different authorisation
paths, and produces a cause-chain record verifiable offline
against the operator's public key. The construction extends a
single-agent persona-mutation governance gate to the swarm
setting and is anchored in the federation primitive
of~\citep{aeros-p16}. Future work includes formalising
the pre-committed envelope specification, addressing
multi-operator federations, and extending the construction to
a multi-operator certificate hierarchy with regulator-issued
roots.

\backmatter

\bmhead{Data availability}
The reference implementation, the regression tests, and the evaluation data
that support the findings of this study are openly available at
\url{https://github.com/s20sc/governed-caste-reassignment}.

\bibliography{references}

\end{document}